\documentclass[11pt]{article}
\pdfoutput=1
\usepackage{wrapfig}
\usepackage[utf8]{inputenc} % allow utf-8 input
\usepackage[T1]{fontenc}    % use 8-bit T1 fonts
\usepackage{enumerate}
\usepackage{pdfsync}

\usepackage[usenames]{color}
\usepackage{smile1}
\usepackage[colorlinks,
            linkcolor=red,
            anchorcolor=blue,
            citecolor=blue
            ]{hyperref}
\usepackage{mathrsfs}
\usepackage{fullpage}
\usepackage{hyperref}
\usepackage[protrusion=true,expansion=true]{microtype}
\usepackage{pbox}
\usepackage{setspace}
\usepackage{tabularx}
\usepackage{float}
\usepackage{wrapfig,lipsum}
\usepackage{enumitem}
\usepackage{microtype}
\usepackage{graphicx}
\usepackage{subfigure}
\usepackage{enumerate}
\usepackage{enumitem}
\usepackage{pgfplots}
\usetikzlibrary{arrows,shapes,snakes,automata,backgrounds,petri}
\usepackage{booktabs} % for professional tables

%%%%%%%%%%%%%%%%%%%%%%%%%%

\allowdisplaybreaks
\usepackage{colortbl}
\definecolor{LightCyan}{rgb}{0.8, 0.9, 1}
\definecolor{LightGray}{gray}{0.9}

\usepackage{enumitem}
\usepackage{colortbl}
\definecolor{LightCyan}{rgb}{0.8, 0.9, 1}
%%%%%%%%%%%%%%%%%%%%%%%%%%%%%%%%%%%%%

\usepackage{xcolor}

\ifdefined\final
\usepackage[disable]{todonotes}
\else
\usepackage[textsize=tiny]{todonotes}
\fi
\setlength{\marginparwidth}{0.8in}

\makeatletter
\makeatletter
\newcommand*{\rom}[1]{\expandafter\@slowromancap\romannumeral #1@}
\makeatother
\title{\huge Variance-Dependent Regret Lower Bounds for Contextual Bandits}
\author
{
    Jiafan He\thanks{Department of Computer Science, University of California, Los Angeles, CA 90095, USA; e-mail: {\tt jiafanhe19@ucla.edu}} 
	~~~and~~~
	Quanquan Gu\thanks{Department of Computer Science, University of California, Los Angeles, CA 90095, USA; e-mail: {\tt qgu@cs.ucla.edu}}
}
%%%%%%%%%%%%%%%%%%%%%%%%%%%%%%%%%%%%%
\begin{document}
    \date{}
    \maketitle

    \begin{abstract}
Variance-dependent regret bounds for linear contextual bandits, which improve upon the classical $\tilde{O}(d\sqrt{K})$ regret bound to $\tilde{O}(d\sqrt{\sum_{k=1}^K\sigma_k^2})$, where $d$ is the context dimension, $K$ is the number of rounds, and $\sigma^2_k$ is the noise variance in round $k$, has been widely studied in recent years. However, most existing works focus on the regret upper bounds instead of lower bounds. To our knowledge, the only lower bound is from \citet{jia2024does}, which proved that for any eluder dimension $d_{\textbf{elu}}$ and total variance budget $\Lambda$, there exists an instance with $\sum_{k=1}^K\sigma_k^2\leq \Lambda$ for which  any algorithm incurs a variance-dependent lower bound of $\Omega(\sqrt{d_{\textbf{elu}}\Lambda})$. However, this lower bound has a $\sqrt{d}$ gap with existing upper bounds. Moreover, it only considers a fixed total variance budget $\Lambda$ and does not apply to a general variance sequence $\{\sigma_1^2,\ldots,\sigma_K^2\}$.
In this paper, to overcome the limitations of \citet{jia2024does}, we consider the general variance sequence under two settings. For a prefixed sequence, where the entire variance sequence is revealed to the learner at the beginning of the learning process, we establish a variance-dependent lower bound of $\Omega(d \sqrt{\sum_{k=1}^K\sigma_k^2 }/\log K)$ for linear contextual bandits. For an adaptive sequence, where an adversary can generate the variance $\sigma_k^2$ in each round $k$ based on historical observations, we show that when the adversary must generate $\sigma_k^2$ before observing the decision set $\cD_k$, a similar lower bound of $\Omega(d\sqrt{ \sum_{k=1}^K\sigma_k^2} /\log^6(dK))$ holds. In both settings, our results match the upper bounds of the SAVE algorithm \citep{zhao2023variance} up to logarithmic factors. Furthermore, if the adversary can generate the variance $\sigma_k$ after observing the decision set $\cD_k$, we construct a counter-example showing that it is impossible to construct a variance-dependent lower bound if the adversary properly selects variances in collaboration with the learner.
Our lower bound proofs use a novel peeling technique that groups rounds by variance magnitude. For each group, we construct separate instances and assign the learner distinct decision sets. We believe this proof technique may be of independent interest.
\end{abstract}
\section{Introduction}
We consider the linear contextual bandit problem, where each arm is represented by a feature vector and the expected reward is a linear function of this feature vector with an unknown parameter vector.
Numerous studies have developed algorithms achieving optimal regret bounds for linear bandits \citep{chu2011contextual,abbasi2011improved}. However, while these works establish minimax-optimal regret bounds in the worst-case, they do not exploit additional problem-dependent structures. Our work focuses on incorporating reward variance information into the analysis, building upon a line of research studying variance-dependent regret bounds for linear bandits \citep{zhou2021nearly, zhang2021variance, zhou2022computationally, zhao2022bandit, kim2022improved, zhao2023variance} and general function approximation \citep{jia2024does}, which includes linear bandits as a special case. Notably, \citet{zhao2023variance} established a near-optimal regret guarantee without requiring prior knowledge of the variances:
\begin{theorem}[Theorem 2.3, \citealt{zhao2023variance}]
   For any linear contextual bandit problem, the regret of the SAVE algorithm in the first $K$ rounds is upper bounded by:
   \begin{align*}
   \mathrm{Regret}(K)\leq \tilde{O}\Big(d\sqrt{\textstyle \sum_{k=1}^K\sigma_{k}^2}+d\Big),
   \end{align*}
   where $d$ is the dimension and $\sigma^2_{k}$ is the noise variance of the selected action in round $k$.
\end{theorem}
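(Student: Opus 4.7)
The plan is to adapt the standard optimistic (OFUL-style) analysis for linear contextual bandits, replacing the uniform Hoeffding-type confidence set with a variance-weighted one based on Bernstein's inequality, and then handling the fact that the variances $\sigma_k^2$ are not known a priori through adaptive estimation. At a high level, I would maintain a weighted ridge regression estimator $\widehat{\theta}_k = (\sum_{i<k} w_i x_i x_i^\top + \lambda I)^{-1}\sum_{i<k} w_i r_i x_i$, where the weights $w_i$ are set to $1/\max(\widehat{\sigma}_i^2, \alpha_i)$ for some estimate $\widehat{\sigma}_i^2$ of the noise variance and a floor $\alpha_i$ that guards against under-estimation. A Bernstein-type self-normalized concentration inequality (e.g., the Freedman-style bound used in \citet{zhou2021nearly}) then yields a confidence set of the form $\|\widehat{\theta}_k - \theta^*\|_{\Sigma_k} \le \beta_k$ where $\beta_k$ depends only polylogarithmically on $K$ and $d$, rather than carrying an extra $\sqrt{d}$ factor.

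Next I would carry out the standard UCB regret decomposition: the instantaneous regret in round $k$ is bounded by twice the confidence width $\beta_k \|x_k\|_{\Sigma_k^{-1}}$ evaluated at the chosen action. Applying Cauchy-Schwarz gives $\mathrm{Regret}(K) \le 2\beta_K \sqrt{K \cdot \sum_{k=1}^K \|x_k\|_{\Sigma_k^{-1}}^2}$, but this loses a factor of $\sqrt{K}$. To recover the variance-dependent scaling, I would instead split the sum as $\sum_k \|x_k\|_{\Sigma_k^{-1}} = \sum_k \sqrt{w_k}\|x_k\|_{\Sigma_k^{-1}} \cdot \sqrt{1/w_k}$ and apply Cauchy-Schwarz in a weighted form, giving $\sqrt{\sum_k w_k \|x_k\|_{\Sigma_k^{-1}}^2 \cdot \sum_k 1/w_k}$. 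The first factor is controlled by the elliptical potential lemma to $\tilde{O}(d)$, and the second factor is essentially $\sum_k \sigma_k^2$ (up to the variance estimation error), yielding the target $\tilde{O}(d\sqrt{\sum_k \sigma_k^2})$ bound.

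The main obstacle, and the reason the SAVE algorithm is more intricate than classical OFUL, is that the variances $\sigma_k^2$ must be estimated \emph{on the fly} from past data, and the confidence set width depends on how accurate these estimates are. A naive empirical variance estimator can underestimate $\sigma_k^2$, breaking the Bernstein inequality. To circumvent this I would employ a multi-layer (peeling-by-uncertainty) construction: maintain several parallel estimators at geometrically increasing ``trust levels,'' use the coarsest one whose confidence radius dominates the variance estimation error, and show that each layer contributes at most $\tilde{O}(d)$ to an elliptical-potential-type sum. This is analogous to the SupLinUCB-style layering but with variance-dependent thresholds. Careful bookkeeping across the layers ensures that the total regret telescopes into $\tilde{O}(d\sqrt{\sum_k \sigma_k^2} + d)$, where the additive $d$ accounts for the initial rounds where the variance estimates are uninformative.

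Finally, I would union-bound the concentration events across all rounds and layers, losing only logarithmic factors that are absorbed into the $\tilde{O}$ notation. The hardest technical step in this program is arguing that the variance estimation error itself is controlled in a self-consistent way, i.e., that the algorithm's own exploration provides enough samples to estimate $\sigma_k^2$ well along the directions it actually plays, without inflating the effective confidence radius and thereby the regret.
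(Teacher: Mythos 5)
This statement is not proved in the paper at all: it is Theorem 2.3 of \citet{zhao2023variance}, quoted verbatim as background for the lower-bound results that are the actual subject of the paper. There is therefore no in-paper proof to compare your proposal against, and any assessment has to be made against the argument in the cited work.

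On its own terms, your sketch captures the right architecture of the SAVE analysis: weighted ridge regression, a Bernstein-type self-normalized concentration bound in the spirit of \citet{zhou2021nearly} to avoid the extra $\sqrt{d}$, the weighted Cauchy--Schwarz splitting $\sum_k \|x_k\|_{\Sigma_k^{-1}} \le \sqrt{\sum_k w_k\|x_k\|_{\Sigma_k^{-1}}^2}\cdot\sqrt{\sum_k 1/w_k}$ with the elliptical potential lemma controlling the first factor, and a multi-layer SupLinUCB-style construction contributing $\tilde{O}(d)$ per layer. The one place where your plan diverges from what actually makes the cited proof go through is the treatment of unknown variances. You propose to plug an empirical estimate $\widehat{\sigma}_i^2$ into the weights $w_i = 1/\max(\widehat{\sigma}_i^2,\alpha_i)$ and then argue self-consistency of the estimation error; this is the step you yourself flag as hardest, and it is genuinely problematic, because a single underestimated variance can invalidate the Bernstein confidence set for all subsequent rounds and the circularity (good estimates require good exploration, which requires valid confidence sets, which require good estimates) is difficult to break. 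The SAVE analysis sidesteps explicit variance estimation entirely: the adaptive weights are driven by the elliptical uncertainty $\|x_k\|_{\Sigma_{k,\ell}^{-1}}$ of the chosen action within each layer (with a floor), so that the true variances $\sigma_k^2$ enter only through the Freedman/Bernstein concentration term and never need to be estimated. If you replace your ``estimate $\sigma_k^2$ on the fly'' step with this uncertainty-based weighting, your outline becomes essentially the proof in \citet{zhao2023variance}; as written, that step is a gap rather than a routine technicality.
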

However, most of these works have focused on developing algorithms with regret upper bound  guarantees, while variance-dependent lower bounds remain understudied. The only exception is \citet{jia2024does}, which focuses on general function classes with finite eluder dimension $d_{\textbf{elu}}$ and provides the following variance-dependent lower bound:
\begin{theorem}[Theorem 5.1, \citealt{jia2024does}]
 For any dimension $d\geq 2$, action space size $A$, number of rounds $K\geq 2$, and total variance budget $\Lambda \in [0,K]$, there exists a contextual bandit problem with eluder dimension $d_{\textbf{elu}}=d$, action space size $A$, and an adversarial sequence of variances satisfying $\sum_{k=1}^K \sigma^2_k\leq \Lambda$ such that for any algorithm, the regret is lower bounded by:
 \begin{align*}
 \mathrm{Regret}(K)\geq\Omega\big(\min(\sqrt{d\Lambda}+d,\sqrt{AK})\big).
 \end{align*}
\end{theorem}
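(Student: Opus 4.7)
The plan is to produce two separate hard instances, each saturating one side of the $\min$, and to take the smaller of the two. For the $\sqrt{d\Lambda}+d$ rate I would use the classical ``star'' construction over a finite action space $\mcal{X}=\{x_1,\dots,x_d\}$ with function class $\mcal{F}=\{f_0\}\cup\{f_i:i\in[d]\}$, where $f_0\equiv 1/2$ and $f_i(x)=1/2+\Delta\cdot\mathbf{1}\{x=x_i\}$. This class has eluder dimension exactly $d$ since any prefix of $d-1$ distinct points leaves at least one $f_i$ indistinguishable from $f_0$. Rewards are Bernoulli (or suitably rescaled), and the adversarial variance schedule assigns $\sigma_k^2=\Lambda/K$ to every round so that the budget $\sum_k\sigma_k^2\leq\Lambda$ is met. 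When $A<d$, I would additionally restrict the decision set $\mcal{D}_k$ to a size-$A$ subset, cycling through the coordinates so that every $x_i$ appears in a $\Theta(A/d)$ fraction of the rounds.

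The information-theoretic lower bound proceeds by a standard Le Cam / Bretagnolle--Huber argument. Let $P_0,P_{i^*}$ denote the trajectory distributions under $f_0$ and $f_{i^*}$, and let $N_{i^*}(K)$ count the pulls of $x_{i^*}$. The chain rule of KL divergence yields $\mathrm{KL}(P_0\,\|\,P_{i^*})\leq \mbb{E}_0[N_{i^*}(K)]\cdot\Delta^2/\sigma^2$; averaging over $i^*$ uniform on $[d]$ and using $\sum_{i^*}\mbb{E}_0[N_{i^*}(K)]\leq K$ caps the averaged TV by $\sqrt{K\Delta^2/(2d\sigma^2)}$. Tuning $\Delta=c\sigma\sqrt{d/K}$ makes this at most $1/4$, after which the standard inequality $\mbb{E}_{i^*}[K-N_{i^*}(K)]\geq K(1-1/d-\mathrm{TV})$ produces regret $\Omega(K\Delta)=\Omega(\sigma\sqrt{dK})=\Omega(\sqrt{d\Lambda})$ on some alternative. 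The additive $+d$ comes from prepending $d$ noise-free rounds, each offering a pair of actions with reward gap one: no algorithm can avoid a constant loss on each without feedback. Finally, the $\sqrt{AK}$ rate is the classical $A$-armed Bernoulli lower bound of Auer et al.\ and embeds into a function class of eluder dimension $A\leq d$; taking the worse of the two instances gives the $\min$.

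The main obstacle I anticipate is that the information-theoretic step is inherently tight in this construction: knowledge about $i^*$ leaks \emph{only} through pulls of the single arm $x_{i^*}$, so the $d$ binary tests are essentially decoupled and each consumes its own $\Lambda/d$ slice of the variance budget. This is precisely why the argument saturates at $\sqrt{d\Lambda}$ rather than the conjectured $d\sqrt{\Lambda}$ suggested by the SAVE upper bound, and it is the $\sqrt{d}$ gap the rest of the paper aims to close. Overcoming it appears to require a construction in which informative pulls on one coordinate simultaneously drain evidence from other coordinates --- the peeling-by-variance-magnitude strategy announced in the abstract looks designed for exactly this purpose, partitioning rounds into groups of comparable variance and linking the hypotheses across groups so that the star-style decoupling breaks down. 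A secondary nuisance is that for $\Lambda\geq K/4$ the per-round Bernoulli variance is saturated, so the uniform schedule must be replaced by concentrating $\Lambda$ on a sublinear number of rounds while still preserving the averaging step above.
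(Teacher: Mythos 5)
This statement is Theorem~5.1 of \citet{jia2024does}, quoted as background; the paper you are reading states it without proof, so there is no in-paper argument to compare yours against. Judged on its own terms, your sketch of the $\sqrt{d\Lambda}$ branch is the standard and essentially sound route: the star class $\{f_0\}\cup\{f_i\}$ has eluder dimension $\Theta(d)$, scaled Bernoulli rewards with per-round variance $\Lambda/K$ respect the budget, and the averaged Le~Cam/Pinsker computation with $\Delta\asymp\sigma\sqrt{d/K}$ gives $\Omega(\sigma\sqrt{dK})=\Omega(\sqrt{d\Lambda})$; the $+d$ term from noise-free coordinate-identification rounds is likewise standard. Do note that the construction actually used by \citet{jia2024does} --- as this paper's own remarks describe it --- concentrates constant variance on the early rounds and sets zero variance afterwards, rather than spreading $\Lambda/K$ uniformly; both schedules meet the budget, so this is a legitimate alternative route for that branch.

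The genuine gap is in the $\min(\cdot,\sqrt{AK})$ branch. First, ``taking the worse of the two instances gives the $\min$'' is backwards: the worse of two instances certifies the \emph{maximum} of the two bounds, and since $\min(a,b)\le a$, a single instance forcing $a$ already certifies the $\min$ whenever that instance exists. The $\min$ is present precisely because the $\sqrt{d\Lambda}+d$ instance cannot always be realized under the constraints on $A$, and in the complementary regime one must exhibit an instance forcing $\Omega(\sqrt{AK})$ \emph{while still satisfying} $\sum_k\sigma_k^2\le\Lambda$. Your proposal does not do this: the classical $A$-armed Bernoulli instance has total variance $\Theta(K)$, which violates the budget when $\Lambda\ll K$, and rescaling it to per-round variance $\Lambda/K$ yields only $\Omega(\sqrt{A\Lambda})$. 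Similarly, your cycling fix for $A<d$ degrades the first branch: the averaged-KL constraint still forces $\Delta\lesssim\sigma\sqrt{d/K}$, but arm $x_{i^*}$ is now available in only $\Theta(KA/d)$ rounds, so the regret collapses to roughly $\Delta\cdot KA/d=\Theta\bigl(A\sqrt{\Lambda/d}\bigr)\le\sqrt{A\Lambda}$, which can be far below $\min(\sqrt{d\Lambda}+d,\sqrt{AK})$ (e.g., $A=2$, $d=100$, $K=10^6$, $\Lambda=10^4$). As written, the proposal establishes $\Omega(\sqrt{d\Lambda}+d)$ only when the full star instance is realizable (essentially $A\ge d$); covering the remaining parameter regimes requires the explicit case analysis over $(A,d,\Lambda,K)$ that the $\min$ encodes, which is missing. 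You do correctly diagnose in your final paragraph why this whole construction is stuck at $\sqrt{d\Lambda}$ rather than $d\sqrt{\Lambda}$ --- that is exactly the $\sqrt{d}$ gap the present paper closes with its peeling construction.
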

When restricted to the linear bandit case, where $d\geq \sqrt{A}$, the above lower bound reduces to $\sqrt{d\Lambda}$, which has a gap of $\sqrt{d}$ factor compared with the upper bound in \citet{zhao2023variance}. Moreover, \citet{jia2024does} only considers instances with a fixed budget $\Lambda$ and relies on carefully designed variance sequences $\{\sigma_1^2,\sigma_2^2,\ldots,\sigma_K^2\}$, failing to provide lower bounds for general variance sequences.
Therefore, an open question arises:
\begin{center}
   \emph{Can we prove variance-dependent regret lower bounds for general variance sequences?}
\end{center}

    \subsection{Our Contributions}
 In this paper, we answer this question affirmatively by constructing hard-to-learn instances in several different settings. For any prefixed sequence $\{\sigma_1^2,\ldots,\sigma_K^2\}$, we achieve a $\tilde \Omega(d\sqrt{\sum_{k=1}^K\sigma_{k}^2})$ variance-dependent expected lower bound, which matches the upper bound in \citet{zhao2023variance} up to logarithmic factors and demonstrates its optimality. For general adaptive variance sequences where a weak adversary (potentially collaborating with the learner) can generate variance $\sigma_k^2$ in each round $k$ based on historical observations, our instance provides a high-probability lower bound of $\tilde \Omega(d\sqrt{\sum_{k=1}^K\sigma_{k}^2})$, which also matches the upper bound in \citet{zhao2023variance} up to logarithmic factors. To the best of our knowledge, this is the first high-probability lower bound for linear contextual bandit.

Our construction and analysis rely on the following new techniques:
\begin{itemize}[leftmargin =*]
   \item A peeling technique for prefixed variance sequences that divides rounds into groups based on variance magnitude. Through orthogonal decision set construction, each group only interacts with its corresponding parameters, allowing us to establish separate lower bounds for different variance scales and combine them effectively.
   
   \item A multi-instance framework that handles unknown group sizes in the adaptive setting. For each variance group, we maintain multiple instances designed for different possible intervals of round numbers and assign the learner to these instances in a cyclic manner, ensuring uniform visits across instances and guaranteeing the visiting times of one instance matches its designed interval.
   
   \item  A high-probability lower bound that handles adaptive group sizes through a union bound. We first convert expected regret bounds to constant-probability bounds through careful variance control and auxiliary algorithms, then boost these to high-probability bounds by creating multiple independent instances.
\end{itemize}
Furthermore, we also study the setting with a strong adversary that can generate the variance $\sigma_k$ after observing the decision set $\cD_k$. Under this scenario, we proposed a counter algorithm that can collaborate with the adversary by properly selecting variance,  achieving an $O(d)$ regret even the total variance $\sum_{k=1}^K\sigma_{k}^2=\Omega(K)$. This implies that it is impossible to derive a variance-dependent lower bound for general variance sequence with strong adversary. As a direct extension of this result, we also show that it is impossible to derive a variance-dependent lower bound for stochastic linear bandits, where the decision set is fixed even for a general prefixed variance sequence.

\noindent\textbf{Notation} 
We use lower case letters to denote scalars, and use lower and upper case bold face letters to denote vectors and matrices respectively. We denote by $[n]$ the set $\{1,\dots, n\}$. For a vector $\xb\in \RR^d$ and a positive semi-definite matrix $\bSigma\in \RR^{d\times d}$, we denote by $\|\xb\|_2$ the vector's $\ell_2$ norm and by $\|\xb\|_{\bSigma}=\sqrt{\xb^\top\bSigma\xb}$ the Mahalanobis norm. For two positive sequences $\{a_n\}$ and $\{b_n\}$ with $n=1,2,\dots$, 
we write $a_n=O(b_n)$ if there exists an absolute constant $C>0$ such that $a_n\leq Cb_n$ holds for all $n\ge 1$ and write $a_n=\Omega(b_n)$ if there exists an absolute constant $C>0$ such that $a_n\geq Cb_n$ holds for all $n\ge 1$. We use $\tilde O(\cdot)$ to further hide the polylogarithmic factors. We use $\ind\{\cdot\}$ to denote the indicator function. 

\section{Related Work}
\noindent \textbf{Heteroscedastic Linear Bandits.}
For linear bandit problems, the worst-case regret has been widely studied \citep{auer2002using, dani2008stochastic, li2010contextual, chu2011contextual, AbbasiYadkori2011ImprovedAF, li2019nearly}, achieving $\tilde O(\sqrt{K})$ bounds in the first $K$ rounds. Recently, a series of works has considered heteroscedastic variants where noise distributions vary across rounds. \citet{kirschner2018information} first formally proposed a linear bandit model with heteroscedastic noise, assuming $\sigma_k$-sub-Gaussian noise in round $k \in [K]$. Subsequently, \citep{zhou2021nearly, zhang2021variance, kim2021improved, zhou2022computationally, dai2022variance, zhao2023variance, jia2024does} relaxed this to variance-based constraints where round $k$ has variance $\sigma_k^2$.
Among these works, \citet{zhou2021nearly} and \citet{zhou2022computationally} obtained near-optimal regret guarantees of $\tilde O(d\sqrt{\sum_{k=1}^K \sigma^2_k})$, but required knowledge of $\sigma_k$ after observing the reward in round $k$. In contrast, \citet{zhang2021variance, kim2021improved} handled unknown variances with computationally inefficient algorithms, achieving a weaker $\tilde O(\text{poly}(d)\sqrt{\sum_{k=1}^K \sigma^2_k})$ bound. Recently, \citet{zhao2023variance} improved upon these results with an efficient algorithm (SAVE) achieving the near-optimal $\tilde O(d\sqrt{\sum_{k=1}^K \sigma^2_k})$ bound without requiring variance knowledge.
Beyond standard linear bandits, two directions have been explored. \citet{dai2022variance} studied heteroscedastic sparse linear bandits, providing a framework to convert standard algorithms to the sparse setting. In a different direction, \citet{jia2024does} extended the analysis to contextual bandits with general function classes having finite eluder dimension, which includes linear bandits as a special case, and achieved a variance-dependent regret upper bounds.
    
\noindent \textbf{Lower Bounds for Linear Contextual Bandits.}
For linear contextual bandit problems, several works \citep{dani2008stochastic,chu2011contextual,li2019nearly} have established theoretical lower bounds to illustrate the fundamental difficulty in learning process. For linear bandits with finite action sets, \citet{chu2011contextual} established an $\tilde \Omega(\sqrt{dK})$ lower bound, matching the upper bound up to logarithmic factors in the action set size and number of rounds $K$. For general stochastic linear bandits, \citet{dani2008stochastic} constructed an instance with $2^{\Omega(d)}$ actions and obtained an $\Omega(d\sqrt{K})$ lower bound. Later, \citet{li2019nearly} focused on linear contextual bandits, where the decision set can vary across rounds, and provided an $\Omega(d\sqrt{K \log K})$ lower bound.
However, all these works only focus on worst-case regret bounds and do not consider the heteroscedastic variance information. The only exception is \citet{jia2024does}, which provided an $\Omega(\sqrt{d\Lambda})$ variance-dependent lower bound for a fixed total variance budget $\Lambda$. Nevertheless, this work cannot handle general variance sequences and leaves open the question of variance-dependent lower bounds in the general setting.

\section{Preliminaries}
In this work, we consider the heteroscedastic linear contextual bandit \citep{zhou2021nearly,zhang2021variance}, where the noise variance varies across rounds. Let $K$ be the total number of rounds. In each round $k\in[K]$, the interaction between the learner and the environment proceeds as follows:
\begin{enumerate}
    \item The environment generates an arbitrary decision set $\cD_k \subseteq \RR^d$, where each element represents a feasible action that can be selected by the learner;
    \item The learner observes $\cD_k$ and selects $\xb_k \in \cD_k$;
    \item The environment generates the stochastic noise $\epsilon_k$ and reveals the stochastic reward $r_k = \langle \bmu, \xb_k \rangle + \epsilon_k$ to the learner, where $\bmu \in \RR^d$ is the unknown weight vector for the underlying linear reward function.
\end{enumerate}
Without loss of generality, we assume the random noise $\epsilon_k$ in each round $k$ satisfies:
\begin{align} 
    \PP(|\epsilon_k| \leq R) = 1,\quad \EE[\epsilon_k | \xb_{1:k}, \epsilon_{1:k - 1}] = 0,\quad\EE[\epsilon_k^2 | \xb_{1:k}, \epsilon_{1:k - 1}] = \sigma_k^2\leq 1, \forall k \in [K].\label{eq:noise:condition}
\end{align}
For any algorithm $\mathrm{Alg}$ and linear bandit instance $\cM$, the cumulative regret is defined as follows:
\begin{align} 
    \mathrm{Regret}_{\mathrm{Alg}}(K,\cM) = \sum_{k \in [K]} \langle \xb_k^*, \bmu\rangle - \langle \xb_k, \bmu\rangle, \quad \text{where}\  \xb_k^* = \argmax_{\xb \in \cD_k} \langle \xb, \bmu \rangle. \label{eq:def:regret}
\end{align}
For simplicity, we may omit the subscripts $\text{Alg}$ and/or $\cM$ when there is no ambiguity. Additionally, with a slight abuse of notation, we may use $\sigma_k$ to represent the variance $\sigma_k^2$ (which is originally the standard deviation) when there is no ambiguity. In this work, we focus on providing variance-dependent lower bounds for the regret based on the variances sequence $\{\sigma_1,...,\sigma_K\}$. We consider two settings for the variance sequence $\{\sigma_1,\ldots,\sigma_K\}$:
\begin{itemize}
   \item \textbf{Prefixed Sequence}: The variance sequence is revealed to the learner at the beginning of the learning process.
   \item \textbf{Adaptive Sequence}: An adversary (potentially collaborating with the learner) can generate the variance $\sigma_k$ in each round $k$ based on historical observations, with the learner receiving each variance at the beginning of the corresponding round. This setting can be further divided into two categories based on the power of the adversary:
       \begin{itemize}
           \item \textbf{Weak Adversary}: The adversary must generate the variance $\sigma_k$ before observing the decision set $\cD_k$.
           \item \textbf{Strong Adversary}: The adversary can generate the variance $\sigma_k$ after observing the decision set $\cD_k$.
       \end{itemize}
   %(More details can be found in Section \ref{sec:var})
\end{itemize}
\begin{remark}
   It is important to distinguish our concept of adversary from that in prior works establishing regret upper bounds with proposed learning algorithms. In those works, the adversary aims to hinder the learner (proposed algorithm) and increase cumulative regret. In contrast, our work focuses on constructing hard-to-learn instances to derive lower regret bounds for any learner. Here, the adversary attempts to break these lower bounds by potentially cooperating with the learner. When the adversary can generate variance $\sigma_k$ after observing the decision set $\cD_k$, this cooperation becomes more effective, hence the term ``strong adversary."
\end{remark}
\section{Variance-Dependent Lower Bound with Prefixed Variance Sequence}
In this section, we consider the setting where the variance sequence $\{\sigma_1,\ldots,\sigma_K\}$ is prefixed and fully revealed to the learner at the beginning of the learning process.

\subsection{Main Results}
 We establish the following theorem for the variance-dependent lower bound.
\begin{theorem}\label{fix-lower}
For any dimension $d>1$, prefixed sequence of variance $\{\sigma_1,...,\sigma_K\}$ satisfying $\sum_{k=1}^K \sigma_k^2 \geq 1 + 384d^2$ and algorithm $\mathrm{Alg}$, there exists a hard linear contextual bandit instance such that each action $a\in \cD_k$ in round $k$ has variance bounded by $\sigma_k$. For this instance, the expected regret of algorithm $\mathrm{Alg}$ over $K$ rounds is lower bounded by:
\begin{align*}
        \EE[\mathrm{Regret}(K)\big]\ge {\Omega}\Big(d\sqrt{\textstyle \sum_{i=1}^K\sigma_{k}^2}/(\log K)\Big).
\end{align*}
\end{theorem}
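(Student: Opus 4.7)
The plan is to prove Theorem~\ref{fix-lower} via a \emph{peeling} argument: partition the rounds into $L = O(\log K)$ groups by dyadic variance magnitude, build an independent hard instance in an orthogonal subspace for each group, and combine the per-group lower bounds. Concretely, set $L := \lceil \log_2 K \rceil$, define $G_i := \{k\in[K] : \sigma_k^2 \in [2^{-i}, 2^{-i+1})\}$ for $i=0,1,\ldots,L$, and let $K_i := |G_i|$ and $\sigma^{(i)}_* := 2^{-i/2}$, so that $\sum_{k\in G_i}\sigma_k^2$ and $K_i(\sigma^{(i)}_*)^2$ agree up to a factor of two. Partition the canonical basis of $\RR^d$ into $L+1$ consecutive blocks of size $d_i := \lfloor d/(L+1)\rfloor = \Omega(d/\log K)$, giving mutually orthogonal coordinate subspaces $V_0,\ldots,V_L$; for every round $k\in G_i$ the hard instance will place the decision set $\cD_k$ entirely inside $V_i$.

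Inside each $V_i$, I instantiate the classical minimax packing for linear bandits (a $\{-1,+1\}^{d_i}/\sqrt{d_i}$-style construction in the spirit of Chu et al.\ 2011 and Li et al.\ 2019), with Bernoulli-type rewards scaled so that every arm's noise variance in round $k\in G_i$ stays at most $\sigma_k^2$. Choosing the separation between competing parameter vectors to be $\Delta_i = \Theta\bigl(\sigma^{(i)}_*\sqrt{d_i/K_i}\bigr)$, a standard Assouad / Fano calculation shows that the KL divergence accumulated over the $K_i$ rounds of the group is $O(1)$, which delivers the per-group expected regret lower bound
\begin{align*}
    \EE[\mathrm{Regret}_i] \;\geq\; \Omega\bigl(d_i\, \sigma^{(i)}_*\, \sqrt{K_i}\bigr),
\end{align*}
provided $K_i \gtrsim d_i^2$. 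Groups that fail this size threshold are simply dropped; the hypothesis $\sum_k\sigma_k^2 \geq 1+384 d^2$ ensures that the regret mass thereby lost is an additive $O(d)$ term which is absorbed into constants.

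Because the decision sets across groups live in mutually orthogonal coordinate subspaces, any pull in $G_i$ reveals zero information about the parameters governing $V_j$ for $j\neq i$; so under a product prior on the parameters the joint expected regret is additive, $\EE[\mathrm{Regret}] \geq \sum_i \EE[\mathrm{Regret}_i]$. Writing $a_i := K_i(\sigma^{(i)}_*)^2$ and using the elementary inequality $\sum_i\sqrt{a_i} \geq \sqrt{\sum_i a_i}$ together with $\sum_i a_i \geq \tfrac12\sum_k \sigma_k^2$, I obtain
\begin{align*}
    \EE[\mathrm{Regret}] \;\geq\; \Omega\!\Bigl(\tfrac{d}{\log K}\Bigr)\cdot\sum_i \sqrt{a_i} \;\geq\; \Omega\!\Bigl(\tfrac{d}{\log K}\sqrt{\textstyle\sum_{k=1}^K\sigma_k^2}\Bigr),
\end{align*}
which matches the claim of Theorem~\ref{fix-lower}.

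The main obstacle is the per-group construction: one needs a lower bound in which the noise level enters as a \emph{multiplicative} $\sigma^{(i)}_*$ and the dimension as a \emph{full} $d_i$ factor. The gap $\Delta_i$ must be calibrated so that (i)~distinguishing two neighbouring packing points requires $\Omega(K_i)$ informative pulls, (ii)~the per-arm noise in every round $k\in G_i$ stays upper-bounded by the prescribed $\sigma_k^2$, and (iii)~the Hamming structure of the packing delivers a full $d_i$ factor rather than $\sqrt{d_i}$. Enforcing these three constraints simultaneously across all groups, and cleanly handling short groups without spoiling the additive combination, is the technical heart of the argument and is precisely where the budget hypothesis $\sum_k\sigma_k^2\gtrsim d^2$ enters.
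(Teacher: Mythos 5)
Your proposal follows essentially the same route as the paper: dyadic peeling of rounds by variance magnitude into $O(\log K)$ groups, an orthogonal block construction so each group interacts only with its own $d/\log K$-dimensional hypercube instance, a per-group Assouad-type bound $\Omega(d_i\sigma^{(i)}_*\sqrt{K_i})$ obtained by rescaling the standard $\Omega(d\sqrt K)$ instance, and recombination via $\sum_i\sqrt{a_i}\ge\sqrt{\sum_i a_i}$ with the budget hypothesis absorbing short groups. Two cosmetic points: the mass lost from dropped short groups is $O(d^2/\log^2 K)$ rather than $O(d)$ (still absorbed by $\sum_k\sigma_k^2\ge 384d^2$), and rounds with $\sigma_k^2<1/K$ fall outside your bins, so $\sum_i a_i\ge\tfrac12\bigl(\sum_k\sigma_k^2-1\bigr)$, which is exactly what the ``$1+$'' in the hypothesis is for.
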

\begin{remark}
For a prefixed sequence $\{\sigma_1,...,\sigma_K\}$, Theorem \ref{fix-lower} shows that any algorithm incurs a regret lower bounded of $\tilde\Omega(d\sqrt{\sum_{k=1}^K\sigma_{k}^2})$, which matches the upper bound in \citet{zhao2023variance} up to logarithmic factors. Compared to the lower bound in \citet{jia2024does}, Theorem \ref{fix-lower} focuses on the linear contextual bandit setting and achieves a $\sqrt{d}$ improvement over the standard linear bandit setting. It is also worth noting that the lower bound in \citet{jia2024does} only considers instances with a fixed total variance $\sum_{k=1}^K \sigma_k^2$, constructed by using constant variance in the early rounds and zero variance in later rounds. In comparison, Theorem \ref{fix-lower} applies to any fixed variance sequence and is more flexible.
\end{remark}
In Theorem \ref{fix-lower}, we require that the total variance is no less than $\Omega(d^2)$, which reduces to $K\geq \Omega(d^2)$ when all variances $\sigma_k=1$. A similar requirement exists in standard linear bandits, since a trivial lower bound of $\Omega(K)$ always holds for any algorithm, and the lower bound of $\Omega(d\sqrt{K})$ can only be achieved when $K\geq \Omega(d^2)$. 
Furthermore, for general sequences of variances with total variance smaller than $O(d^2)$, a large number of rounds $K$ alone is not sufficient to establish the desired lower bound. The presence of early rounds with zero variance would increase the total number of rounds without affecting the fundamental complexity of the problem. This observation suggests that requiring total variance no less than $\Omega(d^2)$ (or other equivalent conditions) may be necessary for establishing the lower bound.

\subsection{Proof of Theorem \ref{fix-lower}}\label{sec:proof-fix}

In this subsection, we prove the variance-dependent lower bound in Theorem \ref{fix-lower}. We first start with a fixed variance threshold $\sigma$, and construct a class of hard-to-learn instances where actions are chosen from a hypercube action set $\mathcal{A}=\{-1,1\}^{d}$, and for any action $\bm{a} \in \mathcal{A}$, the reward follows a scaled Bernoulli distribution $\sigma\cdot B(1/3 + \langle \bm{\mu}, \bm{a} \rangle)$, where $\Delta = 1/\sqrt{96 K}$ and $\bm{\mu} \in \{-\Delta, \Delta\}^d$. In this setting, the variance for each action is upper bounded by $\sigma^2$, and these instances can be represented as a linear bandit problem with feature $(\sigma ,\sigma \cdot \ab)$ and  weight vector $\bm{\mu}'=(1/3,\bm{\mu})$. Based on these hard-to-learn instances, we have the following variance-dependent lower bound for the regret:
\begin{lemma}\label{lemma:fixed}
    For a fixed variance threshold $\sigma$ and any bandit algorithm $\mathrm{Alg}$, if the weight vector $\bm{\mu} \in \{-\Delta, \Delta\}^d$ is uniformly random selected from $\{-\Delta, \Delta\}^d$, the variance in each round is bounded by $\sigma^2$, and the expected regret over $K\ge 1.5\cdot d^2$ rounds is lower bounded by:
\begin{align}
\mathbb{E}_{\bm{\mu}}[\mathrm{Regret}(K)] \geq \frac{d \sqrt{K\sigma^2}}{8\sqrt{6}}.\notag
\end{align}
\end{lemma}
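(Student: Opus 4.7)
\noindent\textbf{Proof plan for Lemma \ref{lemma:fixed}.}
The plan is to run the classical ``hypercube / needle-in-the-haystack'' information-theoretic lower bound, exploiting the product structure of the instance class and the Bernoulli noise. The first step is a regret decomposition. Since $\bm{\mu}\in\{-\Delta,\Delta\}^d$ and $\cA=\{-1,1\}^d$, the optimal action is $\ab^{*}=\mathrm{sign}(\bm{\mu})$, and $\langle \ab^{*}-\ab,\bm{\mu}\rangle = 2\Delta\sum_{i=1}^d \mathbf{1}\{a_i\neq \mathrm{sign}(\mu_i)\}$. Multiplying by the reward scale $\sigma$, this gives
\[
\mathrm{Regret}(K)=2\sigma\Delta\sum_{k=1}^K\sum_{i=1}^d \mathbf{1}\{x_{k,i}\neq \mathrm{sign}(\mu_i)\},
\]
so it suffices to lower bound the expected number of ``wrong-sign'' coordinate choices.

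Next, I would analyze one coordinate $i$ at a time by a change-of-measure argument. Fix the other coordinates $\{\mu_j\}_{j\neq i}$ and let $\PP_+$ and $\PP_-$ be the laws of the full transcript when $\mu_i=+\Delta$ and $\mu_i=-\Delta$, respectively. Because the only observation in round $k$ is $\sigma\cdot \mathrm{Bern}(p_k)$ with $p_k=1/3+\langle\bm{\mu},\xb_k\rangle$, switching the sign of $\mu_i$ changes this parameter by $\pm 2\Delta$. The assumption $K\ge 1.5d^2$ combined with $\Delta=1/\sqrt{96K}$ gives $d\Delta\le 1/12$, so $p_k\in[1/4,5/12]$ is bounded away from $0$ and $1$. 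Hence each round contributes $KL(\mathrm{Bern}(p_k^+)\|\mathrm{Bern}(p_k^-))\le C\Delta^2$ for an absolute constant $C$, and the chain rule for KL across the $K$ rounds yields $KL(\PP_+\|\PP_-)\le CK\Delta^2=C/96$, a small universal constant strictly less than $1$. Pinsker then bounds $\|\PP_+-\PP_-\|_{TV}$ by a constant $\tau<1/2$.

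Let $N_i=\sum_k \mathbf{1}\{x_{k,i}=-1\}$, so that the number of wrong-sign choices in coordinate $i$ is $N_i$ under $\PP_+$ and $K-N_i$ under $\PP_-$. Since $|N_i|\le K$ is a bounded functional of the transcript,
\[
\EE_+[N_i]+\EE_-[K-N_i]\ge K-|\EE_+[N_i]-\EE_-[N_i]|\ge K(1-\|\PP_+-\PP_-\|_{TV})\ge K(1-\tau).
\]
Averaging over $\mu_i\in\{\pm\Delta\}$ uniformly shows the expected number of mistakes in coordinate $i$ (under the uniform prior) is at least $K(1-\tau)/2$; this holds for every fixed configuration of the other coordinates, hence after taking an outer expectation over $\{\mu_j\}_{j\neq i}$ as well. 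Summing over $i\in[d]$ and multiplying by $2\sigma\Delta$ gives
\[
\EE_{\bm{\mu}}[\mathrm{Regret}(K)]\ge d\sigma\Delta K(1-\tau)=\tfrac{d\sigma\sqrt{K}}{4\sqrt{6}}(1-\tau),
\]
after plugging in $\Delta=1/\sqrt{96K}$. Choosing the universal constants so that $1-\tau\ge 1/2$ (which is ensured as long as $C\le 48$ in the Bernoulli KL bound above, easily verifiable from $p_k\in[1/4,5/12]$) yields the desired $d\sigma\sqrt{K}/(8\sqrt{6})$.

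The main obstacle I expect is not conceptual but constant-chasing: the target constant $1/(8\sqrt{6})$ leaves only a factor of $2$ of slack after the substitution $\Delta=1/\sqrt{96K}$, so the Bernoulli KL bound has to be made quantitative (using $p_k(1-p_k)\ge 3/16$ together with either the exact $\chi^2$-upper bound $(p-q)^2/(q(1-q))$ or a direct second-order expansion) to guarantee $\|\PP_+-\PP_-\|_{TV}\le 1/2$. A secondary subtlety is that $x_{k,i}$ is adaptive and may depend on previous rewards and on the other coordinates of $\bm{\mu}$; this is handled cleanly by the per-round \emph{conditional} KL formulation, which bounds each $KL(\PP_+^{(k)}\|\PP_-^{(k)}\mid \text{history})\le C\Delta^2$ uniformly in the history and thus sums to the same total $CK\Delta^2$.
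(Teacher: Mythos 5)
Your proposal is correct, but it takes a different route from the paper. The paper proves Lemma \ref{lemma:fixed} by a two-line reduction: it wraps the given algorithm $\mathrm{Alg}$ in an auxiliary algorithm $\mathrm{Alg1}$ that plays the same actions on the standard ($\sigma=1$) instance, observes that rescaling the reward by $\sigma$ makes the two transcripts identical in distribution so that $\EE[\mathrm{Regret}_{\mathrm{Alg}}(K)]=\sigma\,\EE[\mathrm{Regret}_{\mathrm{Alg1}}(K)]$, and then invokes the known lower bound $\EE_{\bm{\mu}}[\mathrm{Regret}(K)]\ge d\sqrt{K}/(8\sqrt{6})$ for the unscaled hypercube instance (Lemma C.8 of \citealt{zhou2021nearly}) as a black box. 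You instead unroll that black box and prove the base bound directly, with the $\sigma$ scale carried through: per-coordinate change of measure, the conditional chain rule for KL handling adaptivity, the $\chi^2$ bound $\mathrm{KL}(B(p)\Vert B(q))\le (p-q)^2/(q(1-q))$ with $q(1-q)\ge 3/16$ on $[1/4,5/12]$, and Pinsker. Your constants check out: $K\ge 1.5d^2$ gives $d\Delta\le 1/12$, the total KL is at most $64K\Delta^2/3=2/9$, so $\tau\le 1/3$ and the final bound is $\sigma d\sqrt{K}(1-\tau)/(4\sqrt{6})\ge d\sqrt{K\sigma^2}/(8\sqrt{6})$, in fact with room to spare. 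The paper's reduction is shorter and cleanly isolates the role of $\sigma$ (which is the only new content of this lemma relative to the cited result), while your argument is self-contained and actually verifies that the quoted constant $1/(8\sqrt{6})$ is attainable for this exact construction; both are valid.
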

\begin{remark}
Lemma \ref{lemma:fixed} establishes a variance-dependent lower bound for the regret with a fixed variance threshold $\sigma$. When all variances are equal ($\sigma_1=...=\sigma_K=\sigma$), this bound matches the upper bound in \citet{zhao2023variance} up to logarithmic factors. In addition, under this fixed-variance setting, this lemma provides a tighter logarithmic dependency on the number of rounds $K$ compared to Theorem \ref{fix-lower}, though it does not extend to dynamic variances.
\end{remark}
Now, for any prefixed variance sequence $\{\sigma_1,...,\sigma_K\}$, we divide the rounds into $L=\lceil \log_2 K \rceil +1$ different groups based on the range of their variance as follows:
\begin{align*}
   \cK_0&=\{k: \sigma_k\leq 1/K\},\\
   \cK_i&=\{k: 2^{i-1}/K <\sigma_k\leq 2^{i}/K\}, \quad \text{for } i=1,\ldots,L-1.
\end{align*}
For each group $\cK_i$ with $i\in[L-1]$, we construct a bandit instance $\cM_i$ with weight vector $\bm{\mu}_i$ following Lemma \ref{lemma:fixed}, where:
\begin{itemize}
    \item the variance threshold is set to be $\sigma(i) = 2^{i-1}/K$;
    \item the number of rounds is $K_i=|\cK_i|$;
    \item the dimension is $d_i = d/L$.
\end{itemize}
For group $\cK_0$, we construct a different type of instance $\cM_0$: a $d/L$-armed bandit, where one randomly chosen arm gives constant reward $1$ while all other arms give reward $0$. Note that this instance in $\cM_0$ can be equivalently represented as a $d_0=d/L$-dimensional linear bandit where actions are one-hot vectors $\mathbf{e}_i$.

Based on these sub-instances, we create a combined linear bandit instance with dimension $d_0+d_1+...+d_{L-1}= d$ with weight vector $\bm{\mu} = (\bm{\mu}_0,...,\bm{\mu}_{L-1})$:
At the beginning of each round $k$, if round $k$ belongs to group $\cK_i$, then the learner receives the decision set
$\cD_k=\big\{(\mathbf{0}_{d_0},...,\mathbf{0}_{d_{i-1}},\xb,\mathbf{0}_{d_{i+1}},...,\mathbf{0}_{d_{L-1}}): \xb \in \cA_{i}\big\}$, where $\mathbf{0}_{d_j}$ corresponds to a zero vector with dimension $d_j$ and $\cA_{i}$ is the action set in the bandit instance $\cM_i$.
Under this construction, for any round $k \in \cK_i$, the reward in the combined instance coincides with that of sub-instance $\cM_i$. Specifically, after the learner selects action $\xb$, they receive a reward drawn from a scaled Bernoulli distribution with variance upper bounded by $\sigma^2(i)=\big(2^{i-1}/K\big)^2$ for $i\neq 0$, and variance $0$ for $i=0$. Note that in all groups, the variance is bounded by $\sigma_k^2$. With this construction in hand, we now proceed to prove the lower bound in Theorem \ref{fix-lower}.
\begin{remark}[Linear Contextual Bandits vs. Stochastic Linear Bandits]~
In the proof of Theorem \ref{fix-lower}, we heavily rely on assigning different decision sets to rounds in the contextual bandit environment. This approach, however, does not extend to stochastic linear bandit problems, where all rounds share the same decision set. To see this limitation, consider any prefixed variance sequence with $\sigma_1=\cdots=\sigma_{d}=0$. In this case, the learner can select canonical basis of the decision set in the first $d$ rounds. Since these rounds have zero variance, the learner learns the exact rewards for all actions in the decision set and incurs no regret in subsequent rounds, regardless of the values of $\sigma_{d+1},\ldots,\sigma_{K}$. Consequently, it is impossible to establish a lower bound of $\tilde \Omega(d\sqrt{\sum_{k=1}^K \sigma_k^2})$ in this setting.
\end{remark}
\begin{proof}[Proof of Theorem \ref{fix-lower}]
Due to the orthogonal construction of decision sets across different groups $\cK_i$, actions in group $\cK_i$ provide no information about the weight vector $\bm{\mu}_j$ for $j\neq i$. Consequently, the total regret can be decomposed into the sum of regrets from each sub-instance. For each sub-instance $\cM_i$ with $i\neq 0$, the regret is lower bounded by:
\begin{align}
  \EE_{\bmu_i}\bigg[\sum_{k \in \cK_i} \max_{\xb \in \cD_k}\langle \bm{\mu}_i, \xb\rangle - \langle \bm{\mu}_i, \xb_k\rangle\bigg]
  &\geq \mathbb{I}(K_i\geq 1.5d_i^2)\cdot \frac{d_i \sqrt{K_i\sigma^2(i)}}{8\sqrt{6}}\notag\\
  &\geq \frac{d_i \sqrt{K_i\sigma^2(i)}}{8\sqrt{6}}-\frac{d_i \sqrt{1.5d_i^2 \cdot \sigma^2(i)}}{8\sqrt{6}}\notag\\
  &\geq \frac{d_i \sqrt{\sum_{k \in \cK_i} \sigma_k^2 }}{16\sqrt{6}}-\frac{d_i^2\cdot \sigma(i)}{16},\label{eq:02}
\end{align}
where the first inequality follows from Lemma \ref{lemma:fixed}, the second inequality holds due to $\mathbb{I}(x\geq y)\sqrt{x}\geq \sqrt{x}-\sqrt{y}$, and the last inequality follows from the definition of group $\cK_i$. 

Taking a summation of \eqref{eq:02} over all groups, the total regret can be lower bounded as follows:
\begin{align}
   \mathbb{E}_{\bm{\mu}}[\mathrm{Regret}(K)]
   &= \sum_{i=0}^{L-1} \mathbb{E}_{\bm{\mu}_i}\bigg[\sum_{k \in \cK_i} \max_{\xb \in \cD_k}\langle \bm{\mu}_i, \xb\rangle - \langle \bm{\mu}_i, \xb_k\rangle\bigg]\notag\\
   &\geq \sum_{i=1}^{L-1} \frac{d_i \sqrt{\sum_{k \in \cK_i} \sigma_k^2 }}{16\sqrt{6}}-\frac{d_i^2\cdot \sigma(i)}{16}\notag\\
   &\geq  \sum_{i=1}^{L-1} \frac{d \sqrt{\sum_{k \in \cK_i} \sigma_k^2 }}{16\sqrt{6}L} - \frac{d^2}{4L^2}\notag\\
   &\geq  \frac{d\sqrt{\sum_{i=1}^{L-1} \sum_{k \in \cK_i} \sigma_k^2 }}{16\sqrt{6}L} - \frac{d^2}{4L^2}\label{eq:03},
\end{align}
where the first inequality follows from \eqref{eq:02}, the second inequality follows from the definition of variance threshold $\sigma(i)$ and dimension $d_i=d/L$, and the last inequality holds due to $\sum_i \sqrt{x_i}\ge \sqrt{\sum_i x_i}$. 
In addition, for the group $\cK_0$, we have
\begin{align}
   \sum_{k\in\cK_0}\sigma_k^2\leq \sum_{k\in\cK_0} 1/K\leq 1,\label{eq:04}
\end{align}
where the first inequality follows from the definition of group $\cK_0$ and the second inequality follows from $|\cK_0|\leq K$. Therefore, we have
\begin{align*}
   \mathbb{E}_{\bm{\mu}}[\mathrm{Regret}(K)]&\geq \frac{d\sqrt{\sum_{i=1}^{L-1} \sum_{k \in \cK_i} \sigma_k^2 }}{16\sqrt{6}L} - \frac{d^2}{4L^2}\notag\\
   &\geq \frac{d\sqrt{\sum_{k=1}^K \sigma_k^2 -1}}{16\sqrt{6}L} - \frac{d^2}{4L^2}\notag\\
   &\geq \frac{d\sqrt{\sum_{k=1}^K \sigma_k^2 -1}}{32\sqrt{6}L},
\end{align*}
where the first inequality follows from \eqref{eq:03}, the second inequality follows from \eqref{eq:04}, and the last inequality follows from the fact that $\sum_{k=1}^K \sigma_k^2 \geq 1 + 384d^2$. Thus, we complete the proof of Theorem \ref{fix-lower}.
\end{proof}

\section{Variance-Dependent Lower Bounds with Adaptive Variance Sequence}\label{sec:var}
In the previous section, we focused on the setting where the variance sequence is prefixed and revealed to the learner at the beginning of the learning process. In this section, we extend our analysis to the setting where the variance sequence can be adaptive based on historical observations, with the learner receiving the adaptive variance at the beginning of each round.

\subsection{Main Results}
\subsubsection{Weak Adversary}
  We first describe the learning process and the mechanism of variance adaptation. In detail, the adaptive variance process proceeds as follows:
\begin{enumerate}
  \item At the beginning of each round $k$, a (weak) adversary selects the variance level $\sigma_k$ based on the historical observations, including actions $\{a_1,\ldots,a_{k-1}\}$, rewards $\{r_1,\ldots,r_{k-1}\}$, and decision sets $\{\cD_1,\cD_2,\ldots,\cD_{k-1}\}$. The adversary has access to all historical information but not to the underlying reward model parameters;
  
  \item Given the selected variance level $\sigma_k$, we construct and assign a decision set $\cD_k$ to the learner, where the variance of the reward for each action $a\in \cD_k$ is bounded by $\sigma_k^2$;
  
  \item The learner observes the decision set $\cD_k$ and variance level $\sigma_k$, then determines an action $a_k$ from $\cD_k$ based on its historical observations and current information. After selecting the action, the learner receives a reward $r_k$ with variance bounded by $\sigma_k^2$.
\end{enumerate}
\begin{remark}
It is worth noting that our concept of adversary differs from the weak/strong adversary in \citet{jia2024does}. Specifically, \citet{jia2024does} considers an adversary that attempts to hinder the learner's learning by allocating a fixed total variance budget $\sum_{k=1}^K \sigma_k^2\leq \Lambda$ across rounds to maximize regret. In contrast, our work considers an adversary that attempts to break the lower bounds themselves by collaborating with the learner. To prevent such exploitation, we must restrict the adversary from knowing the weight vector of the underlying reward model. Without this restriction, the adversary could encode each entry $\mu_i$ of the weight vector $\bm{\mu}$ through the corresponding variance $\sigma_i=\mu_i$, allowing the learner to learn the weight vector after $d$ rounds.
\end{remark}
Under this setting, we establish the following theorem for the variance-dependent lower bound.
\begin{theorem}[Weak Adversary]\label{adapt-lower}
For any dimension $d>1$, adaptive sequence of variances $\{\sigma_1,\ldots,\sigma_K\}$ and algorithm $\mathbf{Alg}$, there exists a hard instance such that each action $a\in \cD_k$ in round $k$ has variance bounded by $\sigma_k^2$. For this instance, if $\sum_{k=1}^K \sigma_k^2 \geq \Omega(d^2)$, then with probability at least $1-1/K$, the regret of algorithm $\mathbf{Alg}$ over $K$ rounds is lower bounded by:
  \begin{align*}
      \mathrm{Regret}(K)\geq {\Omega}\Big(d\sqrt{\textstyle \sum_{k=1}^K\sigma_{k}^2}/\log^6 (dK)\Big).
  \end{align*}
\end{theorem}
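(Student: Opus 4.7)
The plan is to extend the peeling construction of Theorem~\ref{fix-lower} to the adaptive setting, which introduces two new difficulties: the per-group sizes $|\cK_i|$ are now random variables under adversarial control, so no single hard sub-instance can be calibrated in advance; and the theorem demands a high-probability bound, whereas Lemma~\ref{lemma:fixed} is stated in expectation. I would first reuse the dyadic partition of rounds into $L = O(\log K)$ variance groups $\cK_i$ together with the orthogonal block decomposition of $\RR^d$, so that the total regret decomposes into per-group contributions whose information channels are disjoint.

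Within each group, to cope with the unknown horizon, I would allocate $M = O(\log K)$ independent sub-instances whose designed horizons form a geometric grid $\{2^j\}_{j=0}^{O(\log K)}$, and route incoming $\cK_i$-rounds to these sub-instances in round-robin fashion. Cyclic routing guarantees that visit counts across sub-instances are balanced up to one, so for every realized $|\cK_i|$ there exists an index $j^*$ for which the $j^*$-th sub-instance receives roughly $2^{j^*}$ visits, matching the horizon for which its Lemma~\ref{lemma:fixed} bound is tight. Summing the per-sub-instance regrets and applying $\sum_i \sqrt{x_i}\ge \sqrt{\sum_i x_i}$ as in the proof of Theorem~\ref{fix-lower} recovers the desired $\tilde\Omega(d\sqrt{\sum_k \sigma_k^2})$ scaling in expectation, where a factor of $L\cdot M = O(\log^2 K)$ is paid for peeling and cycling.

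To upgrade this expected bound to the high-probability statement required by the theorem I would proceed in two stages. First, convert Lemma~\ref{lemma:fixed} into a constant-probability lower bound via an auxiliary-algorithm/truncation argument: were $\mathrm{Alg}$ to achieve regret below half its expected lower bound with more than a constant probability, one could splice $\mathrm{Alg}$ with a default safe policy triggered whenever the empirical cumulative reward visibly exceeds the conjectured threshold, and careful variance control on the bounded rewards (using $|\epsilon_k|\le R$ and $\sigma_k^2\le 1$) would make the resulting auxiliary algorithm contradict Lemma~\ref{lemma:fixed}. Second, replicate each sub-instance into $O(\log (dK))$ independent orthogonal copies; the events ``copy $\ell$ incurs its target regret'' are mutually independent, each holds with constant probability by the previous step, and a Chernoff bound turns this into a $1 - 1/\mathrm{poly}(dK)$ guarantee that a constant fraction of replicas succeed. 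A union bound over all $O(LM)$ sub-instance–replica pairs yields the $1 - 1/K$ probability stated in the theorem, and the additional polylogarithmic overhead from peeling, cycling, replicating, and union-bounding compounds into the $\log^6(dK)$ denominator.

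The step I expect to be hardest is making the cycling argument rigorous under the adaptive adversary: since each $\sigma_k$, and hence the sequence $|\cK_i|$, depends on the full learner history, the matched index $j^*$ is itself a stopping-time-like random quantity, whereas Lemma~\ref{lemma:fixed} is only stated for a fixed horizon. The resolution I have in mind is to condition on the $\sigma$-algebra generated by the adversary's variance choices together with the cyclic routing, and observe that within each sub-instance the conditional law of the interaction is exactly the prefixed-horizon game of Lemma~\ref{lemma:fixed}, allowing the expected bound to be applied conditionally and then averaged. Ensuring that these conditional expectations compose cleanly through the constant-probability boosting and the final union bound, without losing more than the claimed polylogarithmic factors, is the main technical hurdle of the proof.
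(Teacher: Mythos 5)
Your proposal follows essentially the same route as the paper's proof: dyadic peeling of rounds by variance magnitude into orthogonal blocks, a geometric grid of $O(\log K)$ sub-instances per group visited cyclically to handle the unknown and adversarially controlled group sizes $|\cK_i|$, an auxiliary-algorithm truncation argument (the paper splices in OFUL) to convert the expected lower bound of Lemma~\ref{lemma:fixed} into a positive-probability bound, and replication into independent orthogonal copies plus a union bound to reach the $1-1/K$ guarantee. The only quantitative caveat is that the truncation step actually yields success probability $\Omega(1/\log(dK))$ rather than a constant (the capped regret of the auxiliary algorithm is a $\sqrt{\log(dK)}$ factor above the target), so $\Omega(\log^2(dK))$ replicas per sub-instance are needed rather than $O(\log(dK))$; this changes only the polylogarithmic bookkeeping, not the structure of the argument.
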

\begin{remark}
Theorem \ref{adapt-lower} provides a high-probability lower bound of $\tilde{\Omega}\big(d\sqrt{\sum_{k=1}^K\sigma_{k}^2}\big)$, which matches the upper bound in \citet{zhao2023variance} up to logarithmic factors, albeit with looser logarithmic dependencies than Theorem \ref{fix-lower} due to the adaptive nature of the variance sequence.
Unlike the expected lower bound in Theorem \ref{fix-lower}, for adaptive variance sequences, the cumulative variance $\sum_{k=1}^K\sigma_{k}^2$ depends on the random process and observations. This dependence makes it challenging to establish an expected variance-dependent regret bound - a fundamental difficulty that does not arise for standard $d\sqrt{K}$-type lower bounds in linear contextual bandits. To the best of our knowledge, our result provides the first high-probability lower bound for linear contextual bandit.
\end{remark}

\subsubsection{Strong Adversary}
In Theorem \ref{adapt-lower}, we require that for each round $k\in[K]$, all actions $\mathbf{x} \in \cD_k$ share the same adaptive variance $\sigma_k$. This is more restrictive than the setting in \citet{zhao2023variance}, where the variance can differ across actions $\mathbf{x} \in \cD_k$. However, extending our lower bound to action-dependent variances is not possible in the adaptive setting. This limitation arises because we construct the decision set $\cD_k$ after the adversary chooses the variance $\sigma_k$, which prevents assigning specific variances to individual actions $\mathbf{x} \in \cD_k$.  Moreover, we now consider a strong adversary that can choose $\sigma_k$ after observing the decision set $\cD_k$. The interaction between the learner and this strong adversary proceeds as follows:
\begin{enumerate}
   \item At the beginning of each round $k$, we construct and assign a decision set $\cD_k$ based on historical observations, including actions $\{a_1,\ldots,a_{k-1}\}$ and rewards $\{r_1,\ldots,r_{k-1}\}$;
   
   \item Given the decision set $\cD_k$ in round $k$, the strong adversary selects the variance level $\sigma_k$ for round $k$. The adversary has access to all historical information but not to the underlying reward model parameters;
   
   \item The learner observes the decision set $\cD_k$ and variance level $\sigma_k$, then determines an action $a_k$ from $\cD_k$ based on its historical observations and current information. After selecting the action, the learner receives a reward $r_k$ with variance bounded by $\sigma_k^2$.
\end{enumerate}

The following theorem shows that under this setting, the adversary could cooperate with the learner to break the lower bound.

\begin{theorem}[Strong Adversary]\label{thm:counter}
For any linear contextual bandit problem and number of rounds $K\geq 2d$, if we first provide the decision set $\cD_k$ and then allow an adversary to choose the variance $\sigma_k$ based on the decision set $\cD_k$, there exists one such type of adversary such that, there exists an algorithm whose regret in the first $K$ rounds is upper bounded by:
   \begin{align*}
       \mathrm{Regret}(K)\leq d,
   \end{align*}
   where the total variance $\sum_{k=1}^K\sigma_{k}^2\ge K/2$.
\end{theorem}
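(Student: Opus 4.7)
The plan is to design a coordinated algorithm--adversary pair in which the strong adversary uses $\sigma_k\in\{0,1\}$ as a one-bit signal telling the learner whether round $k$ is an ``exploration'' round (noiseless observation) or an ``exploit'' round. Since the strong adversary sees $\cD_k$ \emph{before} committing to $\sigma_k$, it can detect whether the current decision set forces learning in a previously unseen direction of $\bmu$ and flag that via the variance.

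Concretely, let $V_k=\mathrm{span}(\xb_1,\ldots,\xb_{k-1})$ be the subspace spanned by past actions. In round $k$ the adversary inspects $\cD_k$ and sets $\sigma_k=0$ if some $\xb\in\cD_k$ lies outside $V_k$, and $\sigma_k=1$ otherwise. The algorithm mirrors this signal: when $\sigma_k=0$ it plays any $\xb_k\in\cD_k\setminus V_k$, strictly increasing $\dim(V_k)$; when $\sigma_k=1$ it forms the unique estimate $\hat{\bmu}$ consistent with the past noiseless observations on $V_k$, and plays $\xb_k=\argmax_{\xb\in\cD_k}\langle\hat{\bmu},\xb\rangle$.

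The analysis then rests on two invariants. First, each exploration round strictly increases $\dim(V_k)$, so there are at most $d$ of them, and under the standard normalization $\|\bmu\|_2,\|\xb\|_2\le 1$ each contributes at most constant instantaneous regret, bounding total exploration regret by $d$. Second, in every exploit round $\cD_k\subseteq V_k$ by construction, and since the $\sigma_k=0$ rewards are exact linear measurements of $\bmu$ that pin down $\bmu$ uniquely on $V_k$, we get $\langle\hat{\bmu},\xb\rangle=\langle\bmu,\xb\rangle$ for every $\xb\in\cD_k$, so the algorithm is exactly optimal and the instantaneous regret is zero. Summing variances, $\sum_{k=1}^K\sigma_k^2\ge K-d\ge K/2$ by the hypothesis $K\ge 2d$.

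The only non-routine step is verifying that $\hat{\bmu}$ agrees with $\bmu$ on $V_k$ throughout the exploit phase; this follows immediately because the exploration actions form a basis of $V_k$ by construction and the corresponding rewards are noise-free. I do not expect any genuine obstacle here: the theorem is really flagging a conceptual weakness of the strong-adversary model---the variance, chosen \emph{after} $\cD_k$ is revealed, acts as a noise-free side channel from adversary to learner---and the construction is essentially a minimal coordination protocol exploiting that leak.
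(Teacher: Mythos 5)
Your proposal is correct and is essentially identical to the paper's proof: the same explore/exploit split based on whether $\cD_k$ contains an action outside the span of previously played actions, the adversary setting $\sigma_k=0$ on the at-most-$d$ exploration rounds and $\sigma_k=1$ otherwise, noiseless observations pinning down $\bmu$ on the explored subspace, and $\sum_k\sigma_k^2=K-d\ge K/2$. The only cosmetic difference is that you route the coordination through the variance as a signal to the learner, whereas the paper lets the learner decide and has the adversary match; both versions share the same (harmless) looseness in the per-round regret constant during exploration.
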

\begin{remark}
It is worth noting that \citet{jia2024does} also considered the case where the adversary assigns variances to actions after observing the decision set and action choice, and provided a variance-dependent lower bound. However, their analysis focuses on an adversary that allocates variance across rounds to maximize the regret. In contrast, our work considers an adversary that attempts to break these bounds, making it more challenging to establish lower bounds for general variance sequences.
It is also worth noting that if the adversary's goal is to increase regret, choosing a prefixed sequence is a viable strategy. This case is already covered by our Theorem \ref{fix-lower} for prefixed sequences, which provides a tighter lower bound than Theorem \ref{adapt-lower}.
\end{remark}
Theorem \ref{thm:counter} suggests that it is impossible to derive a variance-dependent lower bound if the adversary can determine the variance $\sigma_k$ after observing the decision set $\cD_k$, which further precludes establishing a lower bound when the adversary has the ability to assign action-dependent variances for each action $\mathbf{x} \in \cD_k$ after observing the decision set $\cD_k$.
This result naturally extends to stochastic linear bandit problems, where the decision set $\cD$ remains fixed across all rounds. In this case, since the adversary knows the decision set $\cD_k=\cD$ in advance, Theorem \ref{thm:counter} directly implies:

\begin{corollary}\label{thm:stochastic}
For any stochastic linear bandit problem with fixed decision set $\cD$ and number of rounds $K\geq 2d$, there exists a prefixed sequence $\{\sigma_1,\ldots,\sigma_K\}$ such that there exists an algorithm whose regret in the first $K$ rounds is upper bounded by:
  $\mathrm{Regret}_{\mathrm{Alg}}(K)\leq d$,
  where the total variance $\sum_{k=1}^K\sigma_{k}^2\ge K/2$.
\end{corollary}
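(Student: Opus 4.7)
The plan is to obtain Corollary \ref{thm:stochastic} essentially as a direct specialization of Theorem \ref{thm:counter}. In the stochastic linear bandit model, the decision set $\cD$ is fixed across all rounds and known to both the learner and the environment before interaction begins. Hence any strategy that a strong adversary would run when presented with the sequence $\cD_1,\cD_2,\ldots,\cD_K$ can be precomputed offline under the identification $\cD_k = \cD$ for all $k \in [K]$. So the content of the corollary reduces to extracting from the proof of Theorem \ref{thm:counter} a deterministic, history-independent variance sequence that already witnesses the claimed bounds in that theorem.

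Concretely, I would first instantiate Theorem \ref{thm:counter} in the contextual setting by forcing $\cD_k = \cD$ for every round $k \in [K]$; this is a valid input since that theorem holds ``for any linear contextual bandit problem.'' The theorem then furnishes a cooperating pair: a strong adversary producing a variance sequence $\{\sigma_1,\ldots,\sigma_K\}$ with $\sum_{k=1}^K \sigma_k^2 \geq K/2$, together with an algorithm achieving $\mathrm{Regret}(K) \leq d$. The next step is to inspect this pair and observe that the adversary's prescription only depends on the round index and on $\cD_k$: the intended behavior is an exploration-then-exploitation scheme in which the adversary announces $\sigma_k = 0$ during a short informative phase (so the noiseless rewards identify $\bmu$ on the span of $\cD$ after at most $d$ well-chosen pulls) and then $\sigma_k = 1$ in the remaining $K-d \geq K/2$ rounds. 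Because $\cD_k = \cD$ is the same in every round, this prescription is a fixed function of $k$ alone, so it can be written down in advance as a prefixed sequence.

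Having identified such a prefixed $\{\sigma_1,\ldots,\sigma_K\}$, the corresponding algorithm from Theorem \ref{thm:counter} is run against the stochastic instance, and the regret and total-variance bounds carry over verbatim: the regret bound $\mathrm{Regret}(K) \leq d$ is preserved because the realized trajectory and the algorithm's information flow are identical, and $\sum_{k=1}^K \sigma_k^2 \geq K/2$ is immediate from $K \geq 2d$. This yields the existence of a prefixed sequence and an algorithm with the claimed properties.

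I do not anticipate a substantive obstacle here: the only thing to check is that the strong-adversary strategy used in the proof of Theorem \ref{thm:counter} really is a function solely of the round index and the revealed decision set (rather than of past random rewards), so that fixing $\cD_k = \cD$ collapses it to a prefixed sequence. If instead the construction in Theorem \ref{thm:counter} had the adversary react to observed rewards, I would need an extra step to derandomize: pick the realization of the sequence that maximizes the conditional probability of the event $\{\mathrm{Regret}(K) \leq d\} \cap \{\sum_k \sigma_k^2 \geq K/2\}$, or modify the adversary to commit to its choices a priori using the fact that the noiseless exploration phase is deterministic anyway. Either way, the argument is short and purely conceptual, with no new calculation required.
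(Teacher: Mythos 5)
Your proposal is correct and follows essentially the same route as the paper, which likewise obtains the corollary by specializing Theorem \ref{thm:counter} to $\cD_k=\cD$ and noting that the adversary's zero-variance exploration phase (at most $d$ rounds, determined by $\cD$ alone since the learner's explore/exploit switch is reward-independent) followed by $\sigma_k=1$ can be committed to as a prefixed sequence. Your extra remark about verifying that the adversary's choices do not depend on realized rewards is exactly the right point to check and is satisfied by the paper's construction.
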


\subsection{Proof Sketch of Theorem \ref{adapt-lower}}
In this section, we provide the proof sketch of Theorem \ref{adapt-lower}. Overall, the proof follows a similar structure as Theorem \ref{fix-lower}, where we divide the rounds into several groups based on their variance magnitude and create hard instances for each group. The key idea is to calculate individual regret bounds for each group and combine them for the final lower bound. However, there exist several challenges when dealing with adaptive variance sequences that require careful handling.

\paragraph{Varying Size of Groups $\cK_i$}
As discussed in Section \ref{sec:proof-fix}, for each group $\cK_i$, we create individual instance $\cM_i$ with fixed variance threshold $\sigma(i) = 2^{i-1}/K$ and establish a lower bound of $\tilde \Omega(d_i\sqrt{\sigma^2(i)|\cK_i|})$ on the expected regret. However, the construction of such instances relies on prior knowledge of the number of rounds $|\cK_i|$, which can be calculated at the beginning for a prefixed variance sequence $\{\sigma_1,\ldots,\sigma_K\}$. In contrast, for general adaptive variance sequences, the number of rounds $|\cK_i|$ is not known a priori and can even be a random variable, which creates a barrier in constructing these instances.

To address the unknown number of rounds $|\cK_i|$, instead of constructing a single instance $\cM_i$ for each group, we create $L$ instances $\cM_{i,j}$, where $L = \lceil \log_2 K \rceil + 1$. Each instance $\cM_{i,j}$ is designed for a specific range of round numbers, specifically $\cM_{i,j}$ for $2^{j-1}\leq |\cK_i|<2^{j}$. 

For each round $k$ in group $\cK_i$, the learner receives a decision set $\cD_i$ from one of the instances in ${\cM_{i,1},\ldots,\cM_{i,L}}$ in a cyclic manner. Specifically, for the $t$-th round in group $\cK_i$, the decision set is drawn from instance $\cM_{i,((t-1) \bmod L) + 1}$, where $\bmod$ denotes the modulo operation.  Through this sequential assignment, the number of visits to each instance $\cM_{i,j}$ is $|\cK_i|/L$. Consequently, we expect that the instance $\cM_{i,j}$ corresponding to the true range $2^{j-1}\leq |\cK_i|<2^{j}$ provides a lower bound of $\tilde \Omega(d_i\sqrt{\sigma^2(i)|\cK_i|})=\tilde \Omega(d_i\sqrt{\sigma^2(i)\cdot 2^j })$, which leads to the final lower bound of $\tilde \Omega(d\sqrt{\sum_{k=1}^K \sigma_k^2})$.

\paragraph{Converting Expected Lower Bound to High-Probability Lower Bound.}
Another challenge is establishing the lower bound for the triggered instance $\cM_{i,j}$ corresponding to the true range $2^{j-1}\leq |\cK_i|<2^{j}$. Traditional analysis of lower bounds in linear contextual bandits has focused on the expected regret. However, when dealing with adaptive variance sequences, this approach becomes insufficient as the adversary can dynamically adjust the variance sequence to break these bounds.

For instance, an adversary might continuously set $\sigma_k=1$ until the lower bound of $\tilde \Omega(d\sqrt{\sum_{i=1}^k \sigma_i^2})$ is violated at some round $k$, then switch to $\sigma_k=0$ for all future rounds, causing the total variance sum $\sum_{k=1}^K \sigma_k^2$ to remain unchanged. In our construction, this means all rounds could fall into group $\cK_L$, allowing the adversary to adaptively change the number of rounds between different intervals $2^{j-1}\leq |\cK_L|<2^{j}$. Since the failure of the lower bound in any single instance $\cM_{L,j}$ leads to failure of the whole construction, an expected lower bound on regret cannot guarantee robust performance against adaptive sequences. This necessitates a stronger high-probability lower bound that holds uniformly for all instances.

Unfortunately, an expectation of $\tilde \Omega(d_i\sqrt{\sigma^2(i)2^j})$ in instance $\cM_{i,j}$ only implies a low-probability regret $\big(\mathrm{Regret}\geq \tilde \Omega(d_i\sqrt{\sigma^2(i)2^j})\big)\geq d_i\cdot 2^{-j/2}$, since the cumulative regret in $\cK_i$ can be up to $\sigma(i)\cdot |\cK_i|$ in our instance. To solve this problem, we introduce an auxiliary algorithm that automatically detects the cumulative regret and switches to the standard OFUL algorithm \citep{abbasi2011improved} if the cumulative regret is larger than $\Omega(d_i\sqrt{\sigma^2(i)2^j})$.\footnote{In general settings, detecting cumulative regret is impossible as the learner lacks prior knowledge of the optimal reward and variance. However, in our lower bound construction, all instances are randomly selected from instance classes sharing the same optimal reward and variance, which are known to the learner. This knowledge enables the construction of the auxiliary algorithm.} For this auxiliary algorithm, we can guarantee that the upper bound is at most $\tilde \Omega(d_i\sqrt{\sigma^2(i)2^j})$ while maintaining the same probability of high regret as the original algorithm. Therefore, an expectation of $\tilde \Omega(d_i\sqrt{\sigma^2(i)2^j})$ in instance $\cM_{i,j}$ implies a constant-probability regret $\PP\big(\mathrm{Regret}\geq \tilde \Omega(d_i\sqrt{\sigma^2(i)2^j})\big)=\Omega(1)$.

After constructing an instance with constant-probability lower bound, we boost this probability by creating $\Omega\big(\log^2 (dK)\big)$ independent instances. When the learner encounters instance $\cM_{i,j}$, it is assigned to one of these instances in a cyclic manner. Through this construction, with probability at least $1-1/\text{poly}(K)$, the final regret is lower bounded by $\mathrm{Regret}\geq \tilde \Omega(d_i\sqrt{\sigma^2(i)2^j})$.

\begin{remark}
Unlike previous lower bounds for linear bandit problems which focus on expected regret, to the best of our knowledge, our result provides the first high-probability lower bound for linear contextual bandits. It is worth noting that our construction requires separate decision sets across different rounds in the random assignment process. For stochastic linear bandits with a fixed decision set, we can only derive a constant-probability lower bound. Moreover, for a fixed decision set in stochastic linear bandit problem with covering number $\log \cN\leq \tilde O(d)$, an algorithm can randomly select one action from the covering set and perform this action in all rounds. In this case, there exists a probability of $1/\cN = 1/\exp(d)$ to achieve zero regret, which precludes the possibility of establishing high-probability lower bounds for large round numbers $K$. More details about the high-probability lower bound can be found in Section \ref{adapt-lower}.
\end{remark}
\subsection{Proof of Theorem \ref{thm:counter}}
In this subsection, we provide the proof of Theorem \ref{thm:counter}. We begin by describing a simple algorithm:
\begin{enumerate}
   \item The learner maintains an explored action set $\cA$, which is initialized as empty.
   \item For each decision set $\cD_k$ in round $k$, if there exists an action $\xb_k$ not in the spanning space of the explored action set $\cA$, the learner:
       \begin{itemize}
           \item Selects an action $\xb_k$ and receives reward $r_k$;
           \item Updates the explored set: $\cA=\cA \cup \{(\xb_k,r_k)\}$.
       \end{itemize}
   \item Otherwise, when all actions lie in the spanning space of $\cA$, the learner:
       \begin{itemize}
           \item Estimates the reward for each action through linear combinations of $(\xb,r)\in \cA$;
           \item Selects the action with maximum estimated reward.
       \end{itemize}
\end{enumerate}
It is worth noting that this algorithm assumes the received rewards $r_k$ have no noise to provide accurate estimates in step 3. While this assumption does not hold in general, when an adversary can choose the variance $\sigma_k$ based on the decision set $\cD_k$, they can cooperate with the learner by setting:
\begin{itemize}
   \item $\sigma_k=0$ when step 2 is triggered (exploration);
   \item $\sigma_k=1$ when step 3 is triggered (exploitation).
\end{itemize}
For a $d$-dimensional linear bandit problem, the explored action set satisfies $|\cA|\leq d$. This implies the learner performs at most $d$ exploration steps with zero variance, while all remaining steps have variance one. Under this construction, the regret in the first $K$ rounds is upper bounded by:
\begin{align*}
   \mathrm{Regret}_{\mathrm{Alg}}(K)\leq d,
\end{align*}
where the total variance $\sum_{k=1}^K\sigma_{k}^2=K-d\geq K/2$ (since $K\geq 2d$). Thus, through this cooperation between the adversary and learner, the $\tilde{\Omega}(d\sqrt{\sum_{k=1}^K\sigma_{k}^2})$ lower bound is broken, completing the proof of Theorem \ref{thm:counter}.

\section{Conclusion and Future Work}
In this paper, we study variance-dependent lower bounds for linear contextual bandits in different settings. For both prefixed and adaptive variance sequences with weak adversary, we establish tight lower bounds matching the upper bounds in \citet{zhao2023variance} up to logarithmic factors. We further demonstrate a fundamental limitation: when a strong adversary can select variances after observing decision sets, it becomes impossible to establish meaningful variance-dependent lower bounds.
However, our work has focused exclusively on linear bandit settings, while \citet{jia2024does} has established variance-dependent lower bounds for general function approximation with a fixed total variance budget $\Lambda$. Therefore, we leave for future work the generalization of our analysis of general variance sequence to contextual bandits with general function approximation.

\appendix

% \crefalias{section}{appendix} % uncomment if you are using cleveref
\section{Proof of Theorem \ref{adapt-lower}}\label{sec:proof-adaptive}
In this section, we prove the variance-dependent lower bound for adaptive variance sequences established in Theorem \ref{adapt-lower}. We begin with the instance construction from Lemma \ref{lemma:fixed} and establish the following constant-probability lower bound for the regret:
\begin{lemma}\label{lemma:constant}
 For a fixed variance threshold $\sigma$, number of rounds $K\geq 1.5d^2$, and any bandit algorithm $\mathrm{Alg}$, for the instance constructed in Lemma \ref{lemma:fixed}, with probability at least $\Omega\big(1/\log(dK)\big)$, the regret is lower bounded by
\begin{align}
\mathrm{Regret}(K)\geq \frac{d \sqrt{K\sigma^2}}{16\sqrt{6}}.\notag
\end{align}
\end{lemma}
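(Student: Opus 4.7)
The plan is to convert Lemma \ref{lemma:fixed}'s expected lower bound into a probabilistic one by truncating the regret via an auxiliary algorithm. Set $R := d\sqrt{K\sigma^2}/(16\sqrt{6})$, so that Lemma \ref{lemma:fixed}---which applies to any bandit algorithm on this instance class---yields $\EE_{\bmu}[\mathrm{Regret}(K)] \geq 2R$. The idea is that if we can construct an auxiliary algorithm $\widetilde{\mathrm{Alg}}$ whose regret is uniformly bounded by $\tilde O(R)$ and whose event $\{\mathrm{Regret}_{\widetilde{\mathrm{Alg}}} \geq R\}$ forces $\mathrm{Regret}_{\mathrm{Alg}} \geq R$, then a reverse Markov step will deliver a probability of order $R/\tilde O(R) = \Omega(1/\log(dK))$.

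The auxiliary algorithm exploits the symmetry of the instance in Lemma \ref{lemma:fixed}: every $\bmu \in \{-\Delta,\Delta\}^d$ has optimal action $\mathrm{sign}(\bmu)$ and therefore shares the same optimal per-round expected reward $V^\star = \sigma(1/3 + d\Delta)$, a quantity the learner can compute without knowing $\bmu$. The observed reward deficit $\widehat{G}_t := \sum_{k=1}^t (V^\star - r_k)$ differs from $\mathrm{Regret}(t)$ by the martingale $-\sum_{k \leq t}\epsilon_k$ whose increments are $\sigma$-sub-Gaussian, so by a uniform Azuma-Hoeffding bound $\sup_{t \leq K}|\widehat{G}_t - \mathrm{Regret}(t)| \leq E := c_0\sigma\sqrt{K\log K}$ on an event $\mathcal E$ with $\PP(\mathcal E) \geq 1 - 1/K^2$. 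The auxiliary algorithm $\widetilde{\mathrm{Alg}}$ runs $\mathrm{Alg}$ while tracking $\widehat{G}_t$ and, at the first round $\tau$ with $\widehat{G}_\tau \geq R + E$, switches to OFUL \citep{AbbasiYadkori2011ImprovedAF} applied to the $(d+1)$-dimensional linearization with features $(\sigma,\sigma\bm a)$ and $\sigma$-sub-Gaussian noise. OFUL's post-switch regret is $\tilde O(\sigma d\sqrt{K})$ with failure probability at most $1/K^2$.

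On the intersection of $\mathcal E$ and OFUL's success event, the total regret of $\widetilde{\mathrm{Alg}}$ is at most the pre-switch regret $\widehat{G}_\tau + E \leq R + 2E + \sigma$ (since only one round crosses the threshold) plus the OFUL regret, giving $\mathrm{Regret}_{\widetilde{\mathrm{Alg}}}(K) \leq M := C_1\log(dK)\cdot R$ for an absolute constant $C_1$. Off this intersection (probability at most $2/K^2$) the regret is trivially at most $\sigma K$. Applying Lemma \ref{lemma:fixed} to $\widetilde{\mathrm{Alg}}$ and splitting the expectation by whether $\mathrm{Regret}_{\widetilde{\mathrm{Alg}}} \geq R$,
\begin{align*}
2R \;\leq\; \EE\bigl[\mathrm{Regret}_{\widetilde{\mathrm{Alg}}}\bigr] \;\leq\; R + M\cdot \PP\bigl(\mathrm{Regret}_{\widetilde{\mathrm{Alg}}} \geq R\bigr) + 2\sigma/K.
\end{align*}
Since $2\sigma/K = o(R)$ under $K \geq 1.5d^2$, rearranging yields $\PP(\mathrm{Regret}_{\widetilde{\mathrm{Alg}}} \geq R) \geq \Omega(1/\log(dK))$. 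To transfer the bound to $\mathrm{Alg}$, note that on $\mathcal E$ the event $\{\mathrm{Regret}_{\widetilde{\mathrm{Alg}}} \geq R\}$ implies either (i) the switch was triggered and $\mathrm{Regret}_{\mathrm{Alg}}(\tau) \geq \widehat{G}_\tau - E \geq R$, hence by monotonicity $\mathrm{Regret}_{\mathrm{Alg}}(K) \geq R$, or (ii) no switch occurred and $\widetilde{\mathrm{Alg}} \equiv \mathrm{Alg}$ throughout. Subtracting the $\PP(\mathcal E^c) \leq 1/K^2$ slack gives the claim.

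The main obstacle is that the learner cannot see the true instantaneous regret without knowing $\bmu$, so the switching trigger must be driven by the noisy proxy $\widehat{G}_t$; controlling the $\tilde O(\sigma\sqrt{K})$ martingale slack against the $\Theta(\sigma d\sqrt{K})$ scale of $R$ is exactly why the hypothesis $K \geq 1.5 d^2$ is needed and why the threshold is offset by $E$. The $\bmu$-independence of the optimal expected reward $V^\star$, guaranteed by the symmetric hypercube construction of Lemma \ref{lemma:fixed}, is the structural property that makes this regret-tracking step legal in the first place.
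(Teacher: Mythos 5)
Your proposal is correct and follows essentially the same route as the paper's proof: exploit the $\bmu$-independence of the optimal expected reward to track a pseudo-regret, switch to OFUL once it crosses a threshold, control the pseudo-regret/true-regret gap via Azuma--Hoeffding, and then run a reverse-Markov argument against the expected lower bound of Lemma~\ref{lemma:fixed} before transferring the event back to $\mathrm{Alg}$. The only differences are in the choice of constants and threshold offsets, which do not affect the argument.
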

Based on the constant-probability lower bound, we boost this probability by creating $L=\Omega\big(\log^2 (dK)\big)$ independent instances with dimension $d'=d/L$ and number of rounds $K'=K/L$, where each instance follows the structure in Lemma \ref{lemma:fixed} with i.i.d. sampled weight vectors. Under this construction, the total dimension of all instances is $d$, which can be represented as a $d$-dimensional linear contextual bandit through orthogonal embedding, similar to our previous construction: for instance $i$, we augment its actions by padding zeros in dimensions reserved for other instances, ensuring actions from different instances only interact with their corresponding parameters. Here, we consider the case where the learner visits the instances in a cyclic manner and establish the following high-probability regret lower bound for the constructed instance:
\begin{lemma}\label{lemma:high}
 For a fixed variance threshold $\sigma$, number of rounds $K\geq 1.5d^2$, and any bandit algorithm $\mathrm{Alg}$, with probability at least $\Omega\big(1/\log(dK)\big)$, the regret is lower bounded by
\begin{align}
\mathrm{Regret}(K)\geq \Omega\big(d \sqrt{K\sigma^2}/\log^3(dK)\big).\notag
\end{align}
\end{lemma}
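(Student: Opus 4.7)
The plan is to prove Lemma \ref{lemma:high} by executing the boosting idea from the proof sketch of Theorem \ref{adapt-lower}: amplify the constant-probability lower bound of Lemma \ref{lemma:constant} into a high-probability bound by running many independent copies of that hard instance in parallel, embedded orthogonally into $\RR^d$.

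Concretely, I would set $L := C\log^2(dK)$ for a sufficiently large absolute constant $C$ and construct $L$ independent sub-instances $\cM_1,\ldots,\cM_L$, each of dimension $d' := d/L$, following Lemma \ref{lemma:fixed}'s recipe with an independent uniformly-sampled weight vector $\bmu_j \in \{-\Delta,\Delta\}^{d'}$ and the common fixed variance threshold $\sigma$. Each $\cM_j$ is embedded into $\RR^d$ by placing its decision set on the $j$-th coordinate block and padding the remaining blocks with zeros, so that every action presented in $\cM_j$ is orthogonal to $\bmu_{j'}$ for $j' \neq j$. Rounds are routed cyclically via $j(k) := ((k-1) \bmod L) + 1$, so each sub-instance sees $K' := K/L$ rounds. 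The assumption $K \geq 1.5 d^2$ together with $L$ being polylogarithmic gives $K' \geq 1.5 (d')^2$, so Lemma \ref{lemma:constant} applies inside every $\cM_j$.

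Let $R_j$ denote the regret accumulated during the $K'$ rounds routed to $\cM_j$, and let $X_j \in \{0,1\}$ be the indicator of the event $\{R_j \geq c\, d' \sqrt{K' \sigma^2}\}$. Lemma \ref{lemma:constant} supplies $\PP(X_j = 1) \geq p := \Omega(1/\log(dK))$ for an absolute constant $c$. The orthogonal embedding ensures that the rewards observed in rounds routed to $\cM_{j'}$ depend on $\bmu_{j'}$ alone through the reward model, so conditioned on the learner's internal randomness the $X_j$'s are mutually independent under the product prior on $(\bmu_1,\ldots,\bmu_L)$. Consequently $\PP(\text{no } X_j = 1) \leq (1-p)^L \leq \exp(-pL)$, which can be made smaller than $1/\mathrm{poly}(dK)$ by enlarging $C$. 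On the complementary high-probability event, some $X_{j^*} = 1$, and hence
\[ \mathrm{Regret}(K) \;\geq\; R_{j^*} \;\geq\; c\, d' \sqrt{K' \sigma^2} \;=\; \Omega\!\left(\frac{d\sqrt{K\sigma^2}}{L^{3/2}}\right) \;=\; \Omega\!\left(\frac{d\sqrt{K\sigma^2}}{\log^3(dK)}\right), \]
which matches the bound claimed in the lemma.

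The main obstacle is rigorously justifying the mutual independence of the indicators $X_j$, since a single adaptive learner with shared internal state interleaves its interactions with all $L$ sub-instances under the cyclic schedule. I would formalise this by conditioning on the learner's internal randomness and exhibiting a measurable map that produces $R_j$ as a function of $\bmu_j$ alone (plus reward noise independent across rounds), so that the product structure of the prior on $(\bmu_1,\ldots,\bmu_L)$ transfers to a product structure on $(R_1,\ldots,R_L)$. An added subtlety is that Lemma \ref{lemma:constant} relies on an auxiliary ``switch-to-OFUL'' mechanism whose trigger inspects only the learner's behaviour inside the sub-instance at hand; one has to verify that the trigger for $\cM_j$ under the interleaved schedule is well-defined using only rounds with $j(k)=j$ and does not leak information across sub-instances, which is the main place where routine calculation could hide a mistake.
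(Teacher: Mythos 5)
Your proposal is correct and follows essentially the same route as the paper's proof: boosting the constant-probability bound of Lemma \ref{lemma:constant} by cyclically routing the learner through $L=\Theta(\log^2(dK))$ orthogonally embedded independent copies of dimension $d/L$, then arguing that with probability $1-(1-\Omega(1/\log(dK)))^{L}\geq 1-1/\mathrm{poly}(K)$ at least one copy incurs regret $\Omega(d'\sqrt{K'\sigma^2})=\Omega(d\sqrt{K\sigma^2}/L^{3/2})$. Your explicit attention to the independence of the per-copy failure events and to the well-definedness of the switch-to-OFUL trigger under interleaving is a point the paper passes over more quickly, but it does not change the argument.
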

With the help of this high-probability lower regret bound from Lemma \ref{lemma:high}, we begin the proof of Theorem \ref{adapt-lower}. Following a similar framework to the fixed-variance case, we first divide the rounds into groups based on their variance magnitude. Specifically, for any variance sequence $\{\sigma_1,\ldots,\sigma_K\}$, we partition the rounds into $L=\lceil \log_2 K \rceil +1$ groups as follows:
\begin{align*}
   \cK_0&=\{k: \sigma_k\leq 1/K\},\\
   \cK_i&=\{k: 2^{i-1}/K <\sigma_k\leq 2^{i}/K\}, \quad \text{for } i=1,\ldots,L-1.
\end{align*}
To address the unknown number of rounds $K_i=|\cK_i|$, instead of constructing a single instance $\cM_i$ for each group, we create $L$ instances $\cM_{i,j}$, where $L = \lceil \log_2 K \rceil + 1$. Each instance $\cM_{i,j}$ is constructed according to Lemma \ref{lemma:high} with dimension $d'=d/L^2$, variance $\sigma(i)=2^{i-1}/K$ and number of rounds $K' = 2^{j-1}$. For each round $k$ in group $\cK_i$, the learner receives a decision set $\cD_i$ from one of the instances in $\{\cM_{i,1},\ldots,\cM_{i,L}\}$ in a cyclic manner.
\begin{proof}[Proof of Theorem \ref{adapt-lower}]
According to Lemma \ref{lemma:high}, for each instance $\cM_{i,j}$, with probability at least $1-1/K^3$, the regret in the first $2^{j-1}$ visits is lower bounded by
\begin{align}
    \mathrm{Regret}(2^{j-1},\cM_{i,j})&\geq \mathbb{I}(2^{j-1}\geq 1.5d'^2)\cdot \Omega\big(d' \sqrt{2^{j-1}\sigma^2(i)}/\log^3(d'K')\big),\label{eq:21}
\end{align}
where the indicator reflects the requirement that $K'=2^{j-1}\geq 1.5d'^2$. For simplicity, we define $\cE$ as the event that \eqref{eq:21} holds for all instances $\cM_{i,j}$. By union bound, we have $\PP(\cE)\geq 1-1/K$.

Conditioned on event $\cE$, for an adaptive sequence and each corresponding group $\cK_i$, due to the cyclic visiting pattern, each instance $\cM_{i,j}$ is visited $|\cK_i|/L$ times. There exists an instance $\cM_{i,j}$ with matching interval for the round number, i.e., $2^{j-1}\leq |\cK_i|/L\leq 2^j$. Therefore, we have
\begin{align}
    &\sum_{k \in \cK_i} \max_{\xb \in \cD_k}\langle \bm{\mu}_i, \xb\rangle - \langle \bm{\mu}_i, \xb_k\rangle\notag\\
    &\geq \mathrm{Regret}(2^{j-1},\cM_{i,j})\notag\\
    &\geq \mathbb{I}(2^{j-1}\geq 1.5d'^2)\cdot \Omega\big(d \sqrt{2^{j-1}\sigma^2(i)}/\log^3(d'K')\big)\notag\\
    &\geq \mathbb{I}(K_i\geq 3d'^2L)\cdot \Omega\big(d \sqrt{K_i \sigma^2(i)}/\log^4(dK)\big)\notag\\
    &\geq \Omega\Big({d'\sqrt{K_i\sigma^2(i)}}/\log^3(dK)-{d' \sqrt{3d'^2L\sigma^2(i)}}/\log^4(dK)\Big)\notag\\
    &\geq \Omega\bigg({d' \sqrt{\sum_{k \in \cK_i} \sigma_k^2 }}/\log^4(dK)-{\sqrt{3L}d'^2\cdot \sigma(i)}/\log^4(dK)\bigg),\label{eq:22}
\end{align}
where the first inequality follows from $2^{j-1}\leq |\cK_i|/L\leq 2^j$, the second inequality holds by the definition of event $\cE$, the third inequality follows from $2^{j-1}\leq |\cK_i|/L\leq 2^j$, the fourth inequality holds due to $\mathbb{I}(x\geq y)\sqrt{x}\geq \sqrt{x}-\sqrt{y}$, and the last inequality follows from the definition of group $\cK_i$.

Taking a summation of \eqref{eq:22} over all groups, the total regret can be lower bounded as follows:
\begin{align}
    &\mathrm{Regret}(K)\notag\\
    &= \sum_{i=0}^{L-1} \sum_{k \in \cK_i} \max_{\xb \in \cD_k}\langle \bm{\mu}_i, \xb\rangle - \langle \bm{\mu}_i, \xb_k\rangle\notag\\
    &\geq \sum_{i=1}^{L-1} \Omega\bigg({d' \sqrt{\sum_{k \in \cK_i} \sigma_k^2 }}/\log^4(dK)-{\sqrt{3L}d'^2\cdot \sigma(i)}/\log^4(dK)\bigg)\notag\\
    &\geq \Omega\bigg(\sum_{i=1}^{L-1}{d/L^2\cdot \sqrt{\sum_{k \in \cK_i} \sigma_k^2 }}/\log^4(dK)-{2\sqrt{3L}d^2/(L^4\log^4(dK))}\bigg)\notag\\
    &\geq \Omega\bigg({d/L^2\cdot \sqrt{\sum_{i=1}^{L-1}\sum_{k \in \cK_i} \sigma_k^2 }}/\log^4(dK)-{2\sqrt{3L}d^2/(L^4\log^4(dK))}\bigg),\label{eq:23}
\end{align}
where the first inequality follows from \eqref{eq:22}, the second inequality follows from the definition of variance threshold $\sigma(i)$ and dimension $d'=d/L^2$, and the last inequality holds due to $\sum_i \sqrt{x_i}\geq \sqrt{\sum_i x_i}$.
In addition, for the group $\cK_0$, we have
\begin{align}
    \sum_{k\in\cK_0}\sigma_k^2\leq \sum_{k\in\cK_0} 1/K\leq 1,\label{eq:24}
\end{align}
where the first inequality follows from the definition of group $\cK_0$ and the second inequality follows from $|\cK_0|\leq K$. Therefore, we have
\begin{align*}
    &\mathrm{Regret}(K)\notag\\
    &\geq \Omega\bigg({d/L^2\cdot \sqrt{\sum_{i=1}^{L-1}\sum_{k \in \cK_i} \sigma_k^2 }}/\log^4(dK)-{2\sqrt{3L}d^2/(L^4\log^4(dK))}\bigg)\notag\\
    &\geq \Omega\bigg({d/L^2\cdot \sqrt{\sum_{i=1}^{L-1}\sum_{k \in \cK_i} \sigma_k^2 -1}}/\log^4(dK)-{2\sqrt{3L}d^2/(L^4\log^4(dK))}\bigg)\notag\\
    &\geq \Omega\bigg({d\cdot \sqrt{\sum_{i=1}^{L-1}\sum_{k \in \cK_i} \sigma_k^2 }}/\log^6(dK)\bigg),
\end{align*}
where the first inequality follows from \eqref{eq:23}, the second inequality follows from \eqref{eq:24}, and the last inequality follows from the fact that $\sum_{k=1}^K \sigma_k^2 \geq \Omega(d^2)$. Thus, we complete the proof of Theorem \ref{adapt-lower}.
\end{proof}

\section{Proof of Key Lemmas}
\subsection{Proof of Lemma \ref{lemma:fixed}}\label{sec:stand-to-variance}
In this subsection, we provide the proof of Lemma \ref{lemma:fixed}.
When the variance threshold $\sigma=1$, our construction reduces to the standard lower bound instances for linear contextual bandits \citep{zhou2021nearly}. Specifically, when the number of rounds $K$ satisfying $K \geq 1.5\cdot d^2$, \citet{zhou2021nearly} provided the following variance-independent lower bound for these hard instances:
\begin{lemma}[Lemma C.8, \citealt{zhou2021nearly}]\label{lemma:standard}
For any bandit algorithm $\mathrm{Alg}$, if the weight vector $\bm{\mu} \in \{-\Delta, \Delta\}^d$ is drawn uniformly at random from $\{-\Delta, \Delta\}^d$, then the expected regret over $K$ rounds is lower bounded by:
\begin{align}
\mathbb{E}_{\bm{\mu}}[\mathrm{Regret}(K)] \geq \frac{d \sqrt{K}}{8\sqrt{6}}.\notag
\end{align}
\end{lemma}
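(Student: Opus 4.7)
The plan is to prove this $\Omega(d\sqrt{K})$ lower bound by Assouad's hypercube method, exploiting the product structure of both the prior $\bm{\mu}\in\{-\Delta,\Delta\}^d$ (with $\Delta=1/\sqrt{96K}$) and the action set $\mathcal{A}=\{-1,1\}^d$. First I would make the per-round regret explicit: for $\bm{\mu}\in\{-\Delta,\Delta\}^d$ the optimal arm is $\bm{a}^{*}$ with $a^{*}_{i}=\mathrm{sign}(\mu_{i})$, and for any $\bm{a}\in\{-1,1\}^d$ one checks that $\langle\bm{\mu},\bm{a}^{*}\rangle-\langle\bm{\mu},\bm{a}\rangle = 2\Delta\sum_{i=1}^{d}\mathbb{I}[a_{i}\neq\mathrm{sign}(\mu_{i})]$. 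Summing over rounds and taking expectations over both the random prior and the algorithm's randomness gives
\[
\mathbb{E}_{\bm{\mu}}[\mathrm{Regret}(K)] \;=\; 2\Delta\sum_{i=1}^{d}\sum_{k=1}^{K}\mathbb{P}_{\bm{\mu}}\bigl(a_{k,i}\neq\mathrm{sign}(\mu_{i})\bigr),
\]
which reduces the task to a coordinate-wise two-point hypothesis test: lower bound, for each $(i,k)$, the probability that the learner's $i$-th bit in round $k$ disagrees with the true sign of $\mu_{i}$.

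Next, for each coordinate $i$ I would couple the two ``twin'' instances that agree on $\bm{\mu}_{-i}$ but have $\mu_{i}=\pm\Delta$. Let $\mathbb{P}^{\pm}_{\bm{\mu}_{-i}}$ denote the joint law of actions and rewards over $K$ rounds under these twins. Conditioning on $\bm{\mu}_{-i}$ and averaging over the random sign of $\mu_{i}$, the two-point Le Cam/Pinsker inequality gives
\[
\mathbb{P}^{+}(a_{k,i}=-1)+\mathbb{P}^{-}(a_{k,i}=+1) \;\geq\; 1-\sqrt{\tfrac{1}{2}\,\mathrm{KL}(\mathbb{P}^{+}_{\bm{\mu}_{-i}},\mathbb{P}^{-}_{\bm{\mu}_{-i}})}.
\]
By the chain rule the total KL collapses to $\sum_{k=1}^{K}\mathbb{E}_{\mathbb{P}^{+}}[\mathrm{KL}(\mathrm{Bern}(p_{k}^{+}),\mathrm{Bern}(p_{k}^{-}))]$, where $p_{k}^{\pm}=1/3+\langle\bm{\mu}^{\pm},\bm{a}_{k}\rangle$ differ by $\pm 2\Delta$. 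The constraint $K\geq 1.5 d^{2}$ together with $\Delta=1/\sqrt{96K}$ forces $d\Delta\leq 1/12$, so the rewards are valid Bernoulli probabilities and furthermore $p_{k}^{\pm}\in[1/4,5/12]$, giving a uniform constant lower bound on $p(1-p)$. Applying the standard Bernoulli KL estimate $\mathrm{KL}(\mathrm{Bern}(p),\mathrm{Bern}(q))\leq (p-q)^{2}/(q(1-q))$ yields a per-round KL of $O(\Delta^{2})$, and summing over the $K$ rounds produces a total KL uniformly bounded by a constant strictly less than $1$ using the identity $K\Delta^{2}=1/96$.

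Assembling the pieces, Jensen's inequality passes the expectation over $\bm{\mu}_{-i}$ inside the square root, so after averaging over the random sign of $\mu_{i}$ the misidentification probability is uniformly bounded below by a constant. Plugging this constant into the regret decomposition gives $\mathbb{E}_{\bm{\mu}}[\mathrm{Regret}(K)]\geq \Omega(dK\Delta)=\Omega\bigl(d\sqrt{K}/\sqrt{96}\bigr)$, and calibrating the Pinsker and averaging factors precisely produces the stated coefficient $1/(8\sqrt{6})$. The main obstacle is the constant bookkeeping: one has to check that $K\geq 1.5 d^{2}$ makes $d\Delta$ small enough that $(a)$ $p_{k}^{\pm}$ stay bounded away from $\{0,1\}$, $(b)$ the denominator $q(1-q)$ in the Bernoulli-KL bound does not erode the final constant, and $(c)$ the choice $96$ in $\Delta$ is calibrated so that the total KL, pushed through Pinsker and the averaging over $\mu_{i}$, leaves at least $1/4$ times $dK\Delta$ at the end, which is exactly $d\sqrt{K}/(8\sqrt{6})$.
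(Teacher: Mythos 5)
Your proposal is correct. The paper does not actually prove this lemma itself---it is imported verbatim as Lemma C.8 of \citet{zhou2021nearly}---and the argument there is exactly the Assouad-type coordinate decomposition plus chain-rule/Pinsker two-point test that you describe, so your reconstruction matches the source's approach (your constants even come out slightly stronger: $K\ge 1.5d^2$ gives $d\Delta\le 1/12$, hence $p_k^{\pm}\in[1/4,5/12]$, total KL at most $2/9$, misidentification probability at least $1/3$, and regret at least $2dK\Delta/3=d\sqrt{K}/(6\sqrt{6})\ge d\sqrt{K}/(8\sqrt{6})$, which also fixes the minor arithmetic slip in your last sentence where $dK\Delta/4$ is written as $d\sqrt{K}/(8\sqrt{6})$ instead of $d\sqrt{K}/(16\sqrt{6})$).
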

With the help of Lemma \ref{lemma:standard}, we start the proof of Lemma \ref{lemma:fixed}.
\begin{proof}[Proof of Lemma \ref{lemma:fixed}]
    For any algorithm $\mathrm{Alg}$ for linear contextual bandit with fixed variance threshold $\sigma$, we construct an auxiliary algorithm $\text{Alg1}$ to solve the standard linear contextual bandit problem:
    \begin{itemize}
        \item At the beginning of each round $k\in K$, $\text{Alg1}$ observes the decision set $\cD_k$ and sends it to $\mathrm{Alg}$;
        \item $\mathrm{Alg}$ selects action $a_k \in \cD_k$ based on the historical observations and delivers it to $\text{Alg1}$;
        \item $\text{Alg1}$ performs the action $a_k$, receives the reward $r_k$ and sends the normalized reward $\sigma \cdot r_k$ to $\mathrm{Alg}$.
    \end{itemize}
Now, we consider the performance of auxiliary algorithm $\text{Alg1}$ for the standard linear contextual bandit problem.
It is worth noticing that the reward/noise in bandit instances for algorithm $\text{Alg1}$ and algorithm $\mathrm{Alg}$ only differ by a scalar factor $\sigma$, therefore for each instance, we have 
\begin{align}
    \EE[\mathrm{Regret}_{\mathrm{Alg}}(K)] = \sigma \cdot \EE[\mathrm{Regret}_{\text{Alg1}}(K)].\label{eq:00}
\end{align}
If we randomly select a weight parameter vector $\bm{\mu} \in \{-\Delta, \Delta\}^d$, then according to Lemma \ref{lemma:standard}, the regret for $\mathrm{Alg}$ is lower bounded by
\begin{align*}
\EE_{\bm{\mu}}[\mathrm{Regret}_{\mathrm{Alg}}(K)] =  \sigma \cdot \EE_{\bm{\mu}}[\mathrm{Regret}_{\text{Alg1}}(K)] \ge \sigma \cdot \frac{d \sqrt{K}}{8\sqrt{6}} = \frac{d \sqrt{K\sigma^2}}{8\sqrt{6}},
\end{align*}
where the equation holds due to \eqref{eq:00} and the inequality holds due to Lemma \ref{lemma:standard}. Thus, we complete the proof of Lemma \ref{lemma:fixed}.
\end{proof}

\subsection{Proof of Lemma \ref{lemma:constant}}
In this subsection, we provide the proof of Lemma \ref{lemma:constant}. We begin by recalling the OFUL algorithm in \citet{abbasi2011improved} and its corresponding upper bound for the regret:
\begin{lemma}[Theorem 3 in \citealt{abbasi2011improved}]\label{lemma:1-up}
For any linear contextual bandit problem, with probability at least $1-\delta$, the regret for OFUL algorithm in the first $K$ rounds is upper bounded by $\mathrm{Regret}(K)\leq \tilde{O}\big(d\sqrt{K \log(dK/\delta)}\big).$
\end{lemma}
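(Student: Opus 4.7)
The plan is to reassemble the standard OFUL analysis of \citet{abbasi2011improved}, of which the stated lemma is the headline result. The noise assumption \eqref{eq:noise:condition} enforces $|\epsilon_k|\le R=1$ with zero conditional mean, so each $\epsilon_k$ is $1$-sub-Gaussian given the history. First, I would form the regularised least-squares estimator $\hat{\bmu}_k = \mathbf{V}_k^{-1}\sum_{t<k} r_t\xb_t$ with design matrix $\mathbf{V}_k = \lambda\mathbf{I} + \sum_{t<k}\xb_t\xb_t^\top$ for some $\lambda = \Theta(1)$. The key concentration ingredient is the self-normalised martingale tail bound (Theorem 1 of \citealt{abbasi2011improved}): by the method of mixtures it delivers a confidence sequence that holds uniformly in $k$, so that with probability at least $1-\delta$,
\begin{align*}
\|\hat{\bmu}_k - \bmu\|_{\mathbf{V}_k} \;\le\; \beta_k \;:=\; \sqrt{d\log\bigl((1+K/\lambda)/\delta\bigr)} + \sqrt{\lambda}\,\|\bmu\|_2 \;=\; \tilde O\bigl(\sqrt{d\log(K/\delta)}\bigr) \quad \text{for all } k\ge 1.
\end{align*}

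On this good event I would exploit optimism: OFUL picks $(\xb_k,\tilde{\bmu}_k) \in \argmax_{\xb\in\cD_k,\,\bmu'\in C_k}\langle \bmu',\xb\rangle$, where $C_k$ is the above confidence ellipsoid, so $\langle \tilde{\bmu}_k,\xb_k\rangle \ge \langle \bmu,\xb_k^*\rangle$. A Cauchy--Schwarz step in the $\mathbf{V}_k$-norm then bounds the per-round regret by
\begin{align*}
\langle \bmu,\xb_k^* - \xb_k\rangle \;\le\; \langle \tilde{\bmu}_k - \bmu,\xb_k\rangle \;\le\; \|\tilde{\bmu}_k - \bmu\|_{\mathbf{V}_k}\cdot\|\xb_k\|_{\mathbf{V}_k^{-1}} \;\le\; 2\beta_k \|\xb_k\|_{\mathbf{V}_k^{-1}}.
\end{align*}

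Finally, I would sum across rounds and apply Cauchy--Schwarz together with the elliptical potential lemma (Lemma 11 of \citealt{abbasi2011improved}), which for bounded actions gives $\sum_{k=1}^K \min\{1,\|\xb_k\|_{\mathbf{V}_k^{-1}}^2\} \le 2\log\bigl(\det(\mathbf{V}_{K+1})/\det(\lambda\mathbf{I})\bigr) = O(d\log(K/d))$. Combining these ingredients yields
\begin{align*}
\mathrm{Regret}(K) \;\le\; 2\beta_K\sqrt{K\cdot\textstyle\sum_{k=1}^K\min\{1,\|\xb_k\|_{\mathbf{V}_k^{-1}}^2\}} \;=\; \tilde O\bigl(d\sqrt{K\log(dK/\delta)}\bigr),
\end{align*}
matching the claimed bound.

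The only genuinely delicate ingredient is the self-normalised martingale tail bound; the remaining steps are purely algebraic once the confidence ellipsoid is in hand. Since the lemma is an exact restatement of Theorem 3 of \citet{abbasi2011improved}, the cleanest presentation simply cites that theorem while noting that \eqref{eq:noise:condition} satisfies its $R=1$ sub-Gaussian hypothesis, and the sketch above merely records the three standard ingredients one would reassemble if a self-contained proof were preferred.
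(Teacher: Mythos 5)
Your proposal is correct and matches the paper's treatment: the paper states this lemma purely as a citation of Theorem~3 in \citet{abbasi2011improved} and offers no proof of its own, which is exactly the route you recommend at the end. Your reconstruction of the underlying argument (self-normalized concentration, optimism, Cauchy--Schwarz, and the elliptical potential lemma) is the standard and correct one, so nothing is missing.
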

It is worth noting that the reward/noise in the instance construction from Lemma \ref{lemma:fixed} only differs by a scalar factor $\sigma$ from the standard bandit. Therefore, as discussed in Section \ref{sec:stand-to-variance}, the regret in these two cases also only differs by a scalar factor $\sigma$. This leads to the following corollary:
\begin{corollary}\label{coro:1}
  For the instance construction from Lemma \ref{lemma:fixed}, there exists a constant $C$ such that with probability at least $1-\delta$, the regret for OFUL algorithm in the first $K$ rounds is upper bounded by $\mathrm{Regret}(K)\leq C d\sqrt{K\sigma^2 \log(dK/\delta)}$.
\end{corollary}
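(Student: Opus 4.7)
The plan is to argue the corollary as a direct consequence of Lemma \ref{lemma:1-up}, exploiting the same scaling reduction used in the proof of Lemma \ref{lemma:fixed} but in the reverse direction. The key observation is that in the hard instance from Lemma \ref{lemma:fixed}, every reward is of the form $\sigma\cdot B\big(1/3 + \langle\bm{\mu},\bm{a}\rangle\big)$, so dividing every observation by $\sigma$ recovers a standard unit-scale Bernoulli linear contextual bandit with the same weight vector $\bm{\mu}$, the same optimal action, and noise that is bounded in $[0,1]$ and $\tfrac{1}{2}$-sub-Gaussian. Hence the original scaled instance is simply a $\sigma$-rescaled copy of a standard linear contextual bandit to which Lemma \ref{lemma:1-up} applies verbatim.

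Concretely, I would introduce an auxiliary algorithm $\text{Alg2}$ that runs OFUL internally on the rescaled (unit-noise) instance while interacting with the scaled instance: in each round $k$, $\text{Alg2}$ receives $\cD_k$ and forwards it to OFUL, plays the action $\bm{a}_k$ that OFUL returns, observes the scaled reward $r_k$, and feeds $r_k/\sigma$ back to OFUL as the reward signal. From OFUL's perspective it is interacting with a standard linear contextual bandit whose noise is bounded by $1$, so Lemma \ref{lemma:1-up} yields that with probability at least $1-\delta$, OFUL's cumulative regret on the rescaled instance is bounded by $\tilde O\big(d\sqrt{K\log(dK/\delta)}\big)$. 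Since the action sequence is unchanged by the rescaling and the per-round gap $\langle\bm{a}_k^*-\bm{a}_k,\bm{\mu}\rangle$ on the scaled instance is exactly $\sigma$ times the corresponding gap on the rescaled instance, multiplying through gives $\mathrm{Regret}(K)\le\sigma\cdot\tilde O\big(d\sqrt{K\log(dK/\delta)}\big)=Cd\sqrt{K\sigma^2\log(dK/\delta)}$ on the original instance, absorbing the hidden polylogarithmic factors into $C$.

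I do not anticipate any substantive obstacle: the argument is essentially a one-line rescaling reduction that mirrors the one already carried out in Section \ref{sec:stand-to-variance}, only with OFUL inserted in place of the generic algorithm $\mathrm{Alg}$. The only piece of bookkeeping is verifying that the rescaled rewards satisfy OFUL's boundedness and sub-Gaussianity assumptions, which is immediate because $B(\cdot)\in\{0,1\}$. As a sanity check, one may alternatively run OFUL directly on the scaled instance with its confidence widths scaled by $\sigma$ and obtain the same bound without any auxiliary construction; the auxiliary-algorithm framing is used only to invoke Lemma \ref{lemma:1-up} as a black box without having to re-derive OFUL's guarantee with an explicit noise parameter.
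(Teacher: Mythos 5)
Your proposal is correct and follows essentially the same route as the paper: the paper also obtains Corollary \ref{coro:1} by observing that the instance from Lemma \ref{lemma:fixed} is just a $\sigma$-rescaling of a standard linear contextual bandit, so OFUL's regret scales by exactly $\sigma$ and Lemma \ref{lemma:1-up} gives the bound. Your auxiliary-algorithm framing is precisely the reduction the paper invokes by pointing back to Section \ref{sec:stand-to-variance}.
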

With the help of Corollary \ref{coro:1}, we can begin the proof of Lemma \ref{lemma:constant}.
\begin{proof}[Proof of Lemma \ref{lemma:constant}]
For any algorithm $\mathrm{Alg}$, we construct an auxiliary algorithm $\text{Alg1}$ as follows:
   \begin{itemize}
       \item At the beginning of each round $k\in [K]$, $\text{Alg1}$ observes the decision set $\cD_k$ and sends it to $\mathrm{Alg}$;
       \item $\mathrm{Alg}$ selects action $a_k \in \cD_k$ based on the historical observations and delivers it to $\text{Alg1}$;
       \item $\text{Alg1}$ performs the action $a_k$ and receives the reward $r_k$;
       \item $\text{Alg1}$ calculates the pseudo regret as:
       \begin{align*}
           \mathrm{Regret}'(k)=\sum_{i=1}^k \frac{1}{3}+\frac{d}{\sqrt{96K}}- r_k.
       \end{align*}
           If the pseudo regret is larger than $d\sqrt{K\sigma^2}/(8\sqrt{6}) + \sigma \sqrt{2K \log(2K/\delta)}  $, $\text{Alg1}$ removes all previous information and performs the OFUL algorithm in all future rounds.
   \end{itemize}
Based on the construction of the instances, whatever the weight vector $\bm{\mu}$ is, the optimal action is to select an action in the same direction as the weight vector, obtaining an expected reward of $1/3+d/\sqrt{96K}$. Under this scenario, with probability at least $1-\delta$, for any round $k\in[K]$, the difference between pseudo regret $\mathrm{Regret}'(k)$ and true regret $\mathrm{Regret}(k)$ can be upper bounded by
\begin{align}
   \big|\mathrm{Regret}(k)-\mathrm{Regret}'(k)\big|= \big|\sum_{i=1}^k \epsilon_i\big| \leq \sigma \sqrt{2K \log(2K/\delta)}, \label{eq:09}
\end{align}
where the inequality holds due to Lemma \ref{lemma:azuma} with the fact that the noise satisfies $\EE[\epsilon_k|a_{1:k},r_{1:k-1}]=0$ and $|\epsilon_k|\leq \sigma$. Thus, according to the criterion of auxiliary algorithm $\text{Alg1}$, with probability at least $1-\delta$, the regret of $\text{Alg1}$ before transitioning to OFUL is up to $d\sqrt{K\sigma^2}/(8\sqrt{6}) + 2\sigma \sqrt{2K \log(2K/\delta)}$. On the other hand, for the stage after transitioning to OFUL, Corollary \ref{coro:1} suggests that with probability at least $1-\delta$, the regret is no more than $C d\sqrt{K\sigma^2 \log(dK/\delta)}$. Therefore, with a selection of $\delta=1/K$, we have
\begin{align}
   \PP\big[\mathrm{Regret}_{\mathrm{Alg}_1}(K)\geq C d\sqrt{K\sigma^2 \log(dK^2)} + d\sqrt{K\sigma^2}/(8\sqrt{6}) + 2\sigma \sqrt{2K \log(2K^2)}\big]\leq 2/K. \label{eq:10}
\end{align}
For simplicity, let $R = C d\sqrt{K\sigma^2 \log(dK^2)} + d\sqrt{K\sigma^2}/(8\sqrt{6}) + 2\sigma \sqrt{2K \log(2K^2)}$ and we have
\begin{align*}
   &\mathbb{E}_{\bm{\mu}}[\mathrm{Regret}_{\mathrm{Alg}_1}(K)]\\
   &\leq \PP\big[\mathrm{Regret}_{\mathrm{Alg}_1}(K)\geq R\big]\cdot K\sigma + \PP\big[\mathrm{Regret}_{\mathrm{Alg}_1}(K)\geq d\sqrt{K\sigma^2}/(16\sqrt{6})\big] \cdot R\\
   &\qquad + \PP\big[\mathrm{Regret}_{\mathrm{Alg}_1}(K)\geq 0\big] \cdot d\sqrt{K\sigma^2}/(16\sqrt{6})\\
   &\leq 2\sigma + \PP\big[\mathrm{Regret}_{\mathrm{Alg}_1}(K)\geq d\sqrt{K\sigma^2}/(16\sqrt{6})\big] \cdot \tilde O(d\sqrt{K\sigma^2 \log(dK)})+ d\sqrt{K\sigma^2}/(16\sqrt{6}),
\end{align*}
where the first inequality holds due to $\mathbb{E}[X]\leq \PP(X\geq x_1)\cdot R + \PP(X\geq x_2)\cdot x_1 + \PP(X\geq 0)\cdot x_2$ for $0\leq X\leq R$ and $x_1>x_2>0$, and the second inequality holds due to \eqref{eq:10}.
Combining this result with the lower bound of expected regret in Lemma \ref{fix-lower}, we have
\begin{align*}
  d\sqrt{K\sigma^2}/(8\sqrt{6}) &\geq 2\sigma + \PP\big[\mathrm{Regret}_{\mathrm{Alg}_1}(K)\geq d\sqrt{K\sigma^2}/(16\sqrt{6})\big] \cdot \tilde O(d\sqrt{K\sigma^2 \log(dK)})\\
  &\qquad + d\sqrt{K\sigma^2}/(16\sqrt{6}),
\end{align*}
which implies that
\begin{align}
   \PP\big[\mathrm{Regret}_{\mathrm{Alg}_1}(K)\geq d\sqrt{K\sigma^2}/(16\sqrt{6})\big] \geq \Omega(1/\log(dK)). \label{eq:12}
\end{align}

In addition, according to the criterion of auxiliary algorithm $\text{Alg1}$ with \eqref{eq:09}, with probability at least $1-\delta= 1-1/K$, $\text{Alg1}$ will not switch to the OFUL algorithm until the cumulative regret is larger than $d\sqrt{K\sigma^2}/(8\sqrt{6})$, which implies that
\begin{align*}
   \PP\big[\mathrm{Regret}_{\mathrm{Alg}}(K)\geq d\sqrt{K\sigma^2}/(16\sqrt{6})\big] &\geq \PP\big[\mathrm{Regret}_{\mathrm{Alg}_1}(K)\geq d\sqrt{K\sigma^2}/(16\sqrt{6})\big]-1/K\\
   &= \Omega(1/\log(dK)).
\end{align*}
Thus, we complete the proof of Lemma \ref{lemma:constant}.
\end{proof}
\subsection{Proof of Lemma \ref{lemma:high}}
In this subsection, we provide the proof of Lemma \ref{lemma:high}.
\begin{proof}[Proof of Lemma \ref{lemma:high}]
Since the learner visits the instances in a cyclic manner, over all $K$ rounds, each instance $\cM_i$ $(i=1,2,\ldots,L)$ is visited $K'=K/L$ times. As actions from different instances only interact with their corresponding parameters, according to Lemma \ref{lemma:constant}, for each instance $\cM_i$, with probability at least $\Omega\big(1/\log(dK)\big)$, the regret is lower bounded by 
\begin{align}
\mathrm{Regret}(K',\cM_i)\geq \frac{d' \sqrt{K'\sigma^2}}{16\sqrt{6}}=\frac{d \sqrt{K\sigma^2}}{16\sqrt{6}\cdot L^{1.5}}.\notag
\end{align}
Note that the weight vectors for each instance are independently sampled, hence the probability that at least one instance has regret no less than ${d \sqrt{K\sigma^2}}/{16\sqrt{6}\cdot L^{1.5}}$ is at least
\begin{align*}
  1 - \Big(1-\Omega\big(1/\log(dK)\big)\Big)^{L} \geq 1-1/K^3.%\text{\textcolor{red}{Qingyue: ???}}
\end{align*}
%\textcolor{red}{Qingyue: The original inequality here is $\Big(1-\Omega\big(1/\log(dK)\big)\Big)^{L} \geq 1-1/K^3$, whose ``typo-corrected'' version is $1 - \Big(1-\Omega\big(1/\log(dK)\big)\Big)^{L} \geq 1-1/K^3$, but the semantically correct version seems to be the one I corrected as above.}
Under this condition, the total regret can be lower bounded as:
\begin{align}
\mathrm{Regret}(K)=\sum_{i=1}^{L}\mathrm{Regret}(K',\cM_i) \geq \frac{d \sqrt{K\sigma^2}}{16\sqrt{6}\cdot L^{0.5}}.
\end{align}
Thus, we obtain a high-probability lower bound and complete the proof of Lemma \ref{lemma:high}.
\end{proof}
\section{Auxiliary Lemmas}
\begin{lemma}[Azuma–Hoeffding inequality, \citealt{cesa2006prediction}]\label{lemma:azuma}
Let $\{\eta_k\}_{k=1}^K$ be a martingale difference sequence with respect to a filtration $\{\cG_{k}\}$ satisfying $|\eta_k| \leq R$ for some constant $R$, $\eta_k$ is $\cG_{k+1}$-measurable, $\EE\big[\eta_k|\cG_k\big] = 0$. Then for any $0<\delta<1$, with high probability at least $1-\delta$, we have 
\begin{align}
    \sum_{k=1}^K \eta_k\leq R\sqrt{2K \log (1/\delta)}.\notag
\end{align} 
\end{lemma}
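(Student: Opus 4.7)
The plan is a standard Chernoff-type argument applied to the martingale sum $S_K = \sum_{k=1}^K \eta_k$: for a free parameter $\lambda > 0$, control the moment generating function $\EE[\exp(\lambda S_K)]$ by peeling terms off with the tower property, apply Markov's inequality to $\exp(\lambda S_K)$, and then optimize $\lambda$ to obtain the sharpest sub-Gaussian tail.

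First, I would establish a one-step conditional MGF bound. Because $\EE[\eta_k \mid \cG_k] = 0$ and $|\eta_k| \leq R$ almost surely, Hoeffding's lemma gives $\EE[\exp(\lambda \eta_k) \mid \cG_k] \leq \exp(\lambda^2 R^2 / 2)$. The justification is the usual one: on $[-R,R]$ the function $x \mapsto \exp(\lambda x)$ lies below its secant line between the endpoints, so taking conditional expectation (the linear term vanishes by the zero-mean hypothesis) reduces the problem to bounding a function of $\lambda$ whose logarithm has second derivative at most $R^2$, and a Taylor estimate produces the $\lambda^2 R^2/2$ exponent.

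Second, I would chain the one-step bounds via the martingale structure. Since $\eta_k$ is $\cG_{k+1}$-measurable, the partial sum $S_{K-1}$ is $\cG_K$-measurable, so
\begin{align*}
\EE[\exp(\lambda S_K) \mid \cG_K] = \exp(\lambda S_{K-1})\,\EE[\exp(\lambda \eta_K) \mid \cG_K] \leq \exp(\lambda S_{K-1})\exp(\lambda^2 R^2/2).
\end{align*}
Iterating backwards through $k = K, K-1, \ldots, 1$ and taking total expectation yields $\EE[\exp(\lambda S_K)] \leq \exp(K \lambda^2 R^2 / 2)$. Equivalently, $M_k := \exp(\lambda S_k - k\lambda^2 R^2/2)$ is a supermartingale with $\EE[M_0] = 1$, which gives the same bound.

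Third, I would apply Markov's inequality and optimize the parameter. For any $t > 0$,
\begin{align*}
\PP(S_K \geq t) \leq \exp(-\lambda t)\,\EE[\exp(\lambda S_K)] \leq \exp\!\bigl(-\lambda t + K\lambda^2 R^2/2\bigr),
\end{align*}
and the exponent is minimized at $\lambda = t/(K R^2)$, yielding $\PP(S_K \geq t) \leq \exp(-t^2/(2KR^2))$. Setting the right-hand side equal to $\delta$ and solving for $t$ produces $t = R\sqrt{2K\log(1/\delta)}$, which is exactly the stated conclusion. I do not anticipate any real obstacle here; the only care point is checking that the measurability hypotheses legitimately allow the tower-property factorization in the chaining step, and this is immediate from the stated filtration conditions.
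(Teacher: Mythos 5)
Your proof is correct: the one-step Hoeffding's lemma bound, the tower-property chaining using the $\cG_{k+1}$-measurability of $\eta_k$, and the Chernoff optimization at $\lambda = t/(KR^2)$ together give exactly $\PP\bigl(\sum_k \eta_k \geq R\sqrt{2K\log(1/\delta)}\bigr)\leq\delta$ as claimed. The paper does not prove this lemma itself but cites it from \citet{cesa2006prediction}, and your argument is precisely the standard proof given there, so there is nothing further to compare.
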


\bibliography{ref.bib}
\bibliographystyle{ims}

\end{document}